\theoremstyle{plain}
\newtheorem{theorem}{Theorem}[section]
\theoremstyle{definition}
\theoremstyle{remark}
\newtcolorbox{thinkbox}{
  colback=blue!5!white,
  colframe=blue!75!black,
  boxrule=0.4pt,
  arc=2mm,
  left=1mm,right=1mm,top=1mm,bottom=1mm,
  breakable
}
\newtcolorbox{pythonbox}{
  colback=black!5!white,
  colframe=black!75!black,
  boxrule=0.4pt,
  arc=1mm,
  listing only,
  listing engine=listings,
  listing options={
    language=Python,
    basicstyle=\ttfamily\footnotesize,
    keywordstyle=\color{blue},
    commentstyle=\color{green!50!black},
    breaklines=true
  },
  breakable
}
\newtcolorbox{answerbox}{
  enhanced,
  colback=red!5!white,
  colframe=red!50!black,
  boxrule=0.5pt,
  arc=1mm,
  left=1mm,right=1mm,top=0.5mm,bottom=0.5mm,
}
\icmltitlerunning{Student-Centered Distillation Narrows the Agentic Gap Between Small and Large LLMs}
\begin{document}

\twocolumn[
  \icmltitle{Student-Centered Distillation Narrows \\ the Agentic Gap Between Small and Large LLMs}



  \icmlsetsymbol{equal}{*}

  \begin{icmlauthorlist}
    \icmlauthor{Yuanjie Lyu}{ustc}
    \icmlauthor{Chengyu Wang}{independent}
    \icmlauthor{Jun Huang}{independent}
    \icmlauthor{Tong Xu}{ustc}
  \end{icmlauthorlist}

  \icmlaffiliation{ustc}{University of Science and Technology of China, Hefei, China}
  \icmlaffiliation{independent}{Independent Researcher (in personal capacity)}

  \icmlcorrespondingauthor{Chengyu Wang}{chywang2013@gmail.com}
  \icmlcorrespondingauthor{Tong Xu}{tongxu@ustc.edu.cn}

  \icmlkeywords{Large Language Models, Agents, Data Synthesis, Knowledge Distillation, Reinforcement Learning}

  \vskip 0.3in
]



\printAffiliationsAndNotice{}  

\begin{abstract}
Large Language Model agents achieve strong performance on multi-step reasoning and tool-use tasks, but their impressive capabilities typically rely on extremely large backbones. Existing distillation approaches train smaller students to imitate full teacher trajectories, yet reasoning and knowledge gaps between the teacher and student can cause compounding errors. We propose \emph{SCoRe}, a student-centered framework in which the student generates training trajectories and the teacher corrects only the earliest error, producing training data matched to the student's abilities and exposing specific weaknesses. The student is first fine-tuned on corrected trajectories. Subsequently, short-horizon reinforcement learning starts from the verified prefix preceding the earliest error, with target rewards assigned at that step. This design enables the student to solve problems through unconstrained RL exploration rather than teacher imitation, while the short-horizon setup improves training stability. On 12 challenging benchmarks, a 7B-parameter student distilled with \emph{SCoRe} closes the agentic performance gap with a 72B-parameter teacher\footnote{
Code: \url{github.com/haruhi-sudo/SCoRe}
}.
\end{abstract}

\section{Introduction}
Recent advances in Large Language Models (LLMs) have led to the rise of ``agents''~\citep{xi2025rise}. Unlike traditional single-pass generation, LLM agents solve complex problems through an iterative \emph{reasoning--action--observation} loop, using frameworks such as ReAct~\citep{yao2023react} and CodeAct~\citep{wang2024executable}. Specifically, LLM agents decompose tasks into sub-goals (reasoning), execute them via external tools such as code interpreters (action)~\citep{schick2023toolformer,gao2023pal}, and then refine their plans based on feedback from tool execution (observation). By combining LLM planning with the precision of external tools, agents mitigate flaws of LLMs such as hallucinations and outdated knowledge, achieving strong performance on real-world interactive tasks~\citep{liu2023agentbench}.

High-performing agents typically use large backbones such as GPT-series~\citep{achiam2023gpt}, resulting in high latency and cost, as complex tasks may require dozens of model calls. To reduce deployment cost, Agent Distillation~\citep{kang2025distilling} decomposes the teacher model's behavior into structured \texttt{[Thought, Action, Observation]} trajectories and trains a smaller student model to imitate them. This enables the student to acquire part of the teacher's reasoning ability while reducing cost. However, this imitation faces two main challenges: (1) \emph{Reasoning Ability Gap:} smaller models often fail to reproduce the teacher's logical decomposition~\citep{magister2022teaching}; (2) \emph{Knowledge Capability Gap:} the student may be unable to execute complex actions despite following the teacher's plan. Both gaps stem from the \emph{emergent abilities}~\citep{wei2022emergent} of LLMs, gained through extensive training and not fully transferable to smaller models. Moreover, in behavior cloning, a single failure at any step can push the student into out-of-distribution states, where errors compound and grow as $O(H^2)$ with horizon length $H$~\citep{ross2011reduction}.

\begin{figure*}[t]
\centerline{
\includegraphics[width=.947\textwidth]{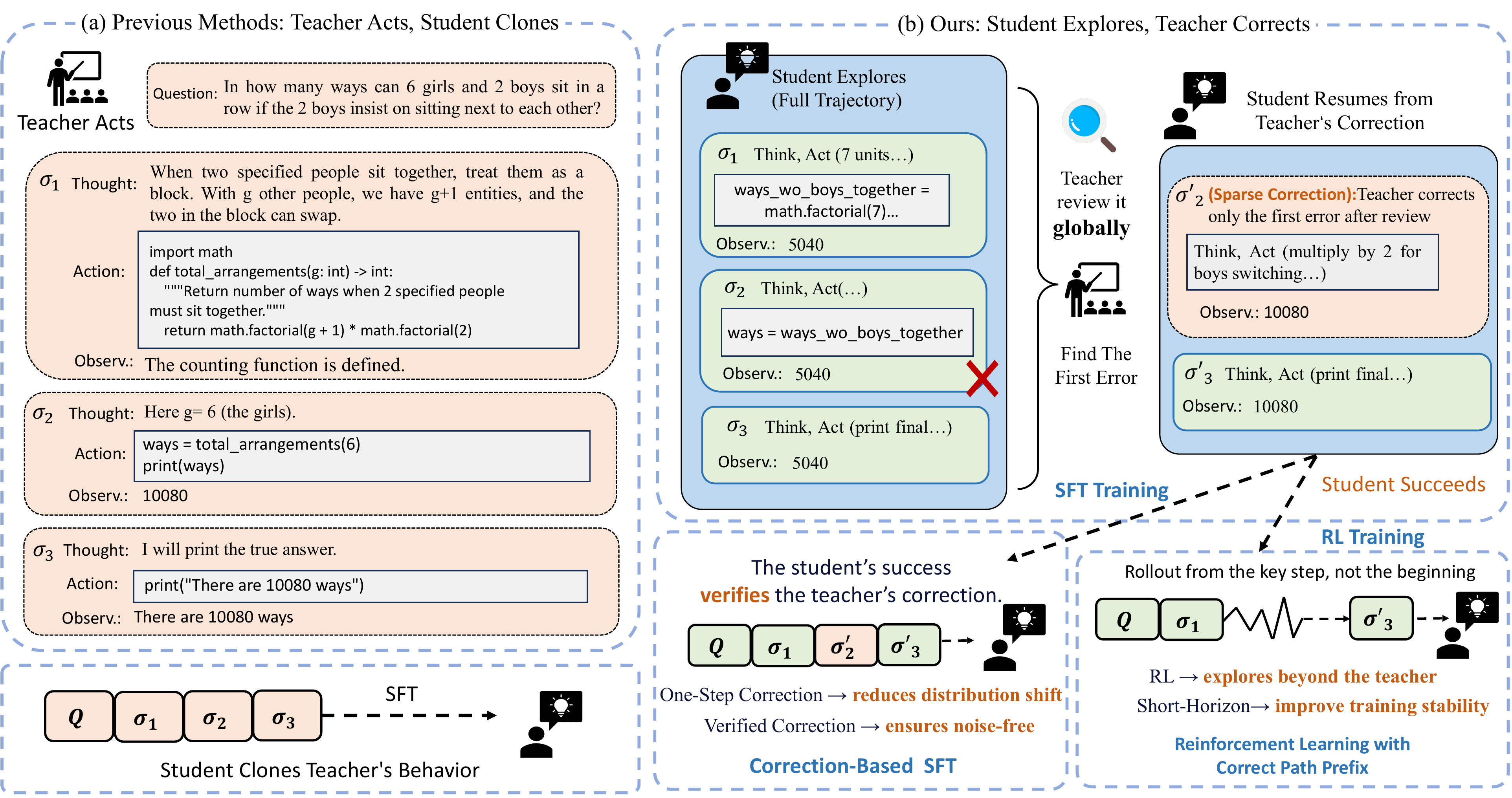}}
\caption{
Comparison between imitation-based distillation and our \emph{SCoRe} framework. (a) Prior methods clone entire teacher trajectories. (b) Our approach lets the student explore independently; \emph{upon completion, the teacher reviews the full trajectory} and minimally intervenes by correcting only the earliest error. Correction-based SFT mitigates the compounding errors of pure imitation. RL rollouts then start from this verified prefix, improving stability and efficiency.
}
\label{fig:demo}
\end{figure*}

To address the limitations of the traditional ``teacher-acts, student-clones'' distillation paradigm, we introduce \emph{SCoRe} (\textbf{S}tudent-\textbf{C}entered \textbf{O}ne-step \textbf{Re}inforcement).
\emph{In SCoRe, the student first produces a complete trajectory, which the teacher reviews globally, intervening only to correct the earliest error}. The student then restarts the task, resuming from the original correct prefix and the teacher's correction.
As shown in Figure~\ref{fig:demo}, this student-centered approach to data generation offers two key advantages: \emph{Capability Matching}, where trajectory complexity aligns with the student's abilities, keeping the data learnable and effective; and \emph{Deficiency Localization}, where the structure of a ``verified prefix'' followed by a ``key step'' explicitly pinpoints the student's weaknesses for targeted optimization.

Before exploration, the student must first acquire basic reasoning--action skills. We initialize it with a cold-start \emph{Behavior Cloning} (BC) phase, performing supervised fine-tuning (SFT) on a small set of high-quality teacher trajectories.
Then, the core phase, \emph{Mentored Problem-Solving} (MPS), allows the student to tackle new tasks independently. After the student completes a trajectory, a teacher inspects it, identifies and corrects the earliest error, and the student regenerates the remainder starting from this corrected prefix. If further mistakes occur, the teacher repeats this process.
Final task success implicitly verifies the correctness of the teacher's fix\footnote{Every teacher correction should pass this verification to reduce the risk of noisy corrections.}. We retain these corrected trajectories with minimal teacher intervention and use them for the next round of SFT. By correcting the earliest error, this approach limits teacher--student distribution shift to a single step, breaks long error chains of BC, and yields $O(H)$ cumulative error growth rather than $O(H^2)$.




Although effective, these steps still rely on teacher corrections and keep the student in imitation mode. To promote genuine problem-solving and even enable the student to surpass the teacher, we introduce a reinforcement learning (RL) phase featuring shorter rollouts and key-step rewards. Our RL method introduces two main innovations.
First, rather than rolling out from the start of the task, we begin from the verified prefix preceding the student's original error. This shortens the horizon and reduces gradient-estimation variance, making updates more stable.
Second, to mitigate the issue of sparse rewards~\citep{andrychowicz2017hindsight}, we supplement the final task-success reward with additional bonuses at key steps: a larger one for reproducing the teacher's correction and a smaller format reward for avoiding the original error.
Finally, during the MPS phase, we retain the data that remains unsolved despite multiple one-step corrections. A small part of this data is marked as challenging and used in RL training.\footnote{
RL optimizes through exploration rather than imitation; the training signal is not fully determined by the teacher, and the policy is therefore not bounded by the teacher.
}


Experiments on 12 challenging benchmarks show that \emph{SCoRe} enables small models, such as Qwen2.5-7B-Instruct, to achieve tool-use performance comparable to that of a 72B teacher, outperforming both BC~\citep{torabi2018behavioral} and GRPO~\citep{shao2024deepseekmath} by a large margin.
These results demonstrate that ability-matched correction, combined with short-horizon key-step RL, can close the performance gap between small and large models.


\section{Related Work}
\label{sec:related_work}
\noindent\textbf{Distillation for LLM Agents.}
Agent distillation is a practical way to reduce the cost of deploying large models such as GPT-4~\citep{achiam2023gpt} or Gemini~\citep{team2023gemini} in a multi-turn agent setting~\citep{yao2023react,wang2024executable}. Formal approaches such as trajectory distillation~\citep{kang2025distilling} and Toolformer-style imitation~\citep{schick2023toolformer,gao2023pal} train smaller models to reproduce teacher-generated \texttt{[Thought, Action, Observation]} traces, transferring planning and tool-use skills.
However, most rely on supervised behavior cloning from expert rollouts, making them vulnerable to teacher--student reasoning and knowledge gaps~\citep{magister2022teaching,cai2025enhancing} and to compounding errors under distribution shift~\citep{ross2011reduction}.
 

DAgger and HG-DAgger~\citep{ross2011reduction,kelly2019hg} aim to mitigate compounding errors; however, these methods are designed for traditional imitation learning domains such as robotic control, and they are not directly applicable to LLM agents, which autonomously employ multiple tools. Furthermore, they remain strictly teacher-led and do not leverage a stable RL framework to progressively expand the student's capabilities.

\noindent\textbf{Agentic Reinforcement Learning.}
RL has become a core paradigm for training LLM agents in dynamic, multi-turn environments~\citep{shridhar2020alfworld,mialon2023gaia}. Early pipelines relied on supervised or rule-based tool-use strategies~\citep{schick2023toolformer}, limiting adaptability to new domains.
Later work integrated RL into agent policies, enabling joint optimization of reasoning chains and external actions. Classical methods such as deep Q-learning~\citep{mnih2015human} and self-play~\citep{silver2017mastering} have been adapted for natural-language decision-making, where intermediate thoughts and tool calls form part of the state~\citep{li2025chain}.
In tool-augmented settings, recent work has optimized multi-tool coordination under real-time constraints~\citep{qian2025toolrl,xu2025comprehensivesurveydeepresearch}.
Despite these advances, challenges remain: long-context rollouts cause instability~\citep{schulman2017proximal,peng2019advantage}, and sparse or delayed rewards hinder credit assignment~\citep{andrychowicz2017hindsight}. These challenges motivate algorithms that combine trajectory-level planning with local, verifiable corrections for stable learning and fine-grained reasoning control.

Closely related work is further discussed in Appendix~\ref{sec:closely_related_work}.


\section{Background}
We formalize the LLM-agent framework and review two training methods: Behavior Cloning and Reinforcement Learning, highlighting their limitations in agent distillation.

\noindent\textbf{The ReAct Framework.}
To enable effective interaction with an external environment $\mathcal{E}$ (e.g., a code interpreter or a search engine), we adopt the ReAct framework~\citep{yao2023react}. In this framework, the agent's behavior is represented as a trajectory $\tau = (t_1, c_1, o_1, \dots, t_H, c_H, o_H)$, where $H$ is the total number of steps. At each step $i$, the policy $\pi$ first generates a \emph{thought} $t_i$ based on the history $s_i$, and then, conditioned on this thought and history, produces a ReAct \emph{action} $c_i$:
\[
s_i = (t_1, c_1, o_1, \dots, t_{i-1}, c_{i-1}, o_{i-1}), \quad (t_i, c_i) \sim \pi(\cdot \mid s_i).
\]
Executing $c_i$ in $\mathcal{E}$ returns an \emph{observation} $o_i$, and $(t_i, c_i, o_i)$ is then added to the history. The loop continues until a terminal action (e.g., producing the final answer) ends the trajectory.

\noindent\textbf{Agent Distillation via Behavior Cloning.}
To reduce the inference cost of large LLM agents, agent distillation transfers capabilities from a powerful teacher $\pi_E$ to a smaller student $\hat{\pi}$. The most common method, \emph{Behavior Cloning (BC)}~\citep{torabi2018behavioral}, trains $\hat{\pi}$ on teacher-generated trajectories $D_T$ to imitate both thoughts ($t_i$) and ReAct actions ($c_i$):
\begin{equation}
\mathcal{L}_{\mathrm{BC}}(\theta) = - \mathbb{E}_{\tau \sim D_T} \left[ \sum_{i=1}^{|\tau|} \log \hat{\pi}(a_i \mid s_i; \theta) \right],
\label{eq:bc}
\end{equation}
where $a_i = (t_i, c_i)$ and $s_i$ is the history context. While BC is effective, \emph{reasoning} and \emph{knowledge} gaps between $\pi_E$ and $\hat{\pi}$ yield a non-trivial per-step error rate $\varepsilon$. Under covariate shift, these errors accumulate over the horizon $H$, making the expected total cost grow as $O(H^2 \varepsilon)$~\citep{ross2011reduction}.

\noindent\textbf{Agent Optimization via Reinforcement Learning.}
Reinforcement Learning (RL) extends beyond imitation by enabling agents to explore actively. The problem is modeled as a Markov Decision Process (MDP), where the state $s_i$ is the history context, the action is $a_i$, the policy is $\hat{\pi}$, and the trajectory is evaluated by a reward $R(\tau)$. In practice, rewards are often sparse; for example, $R(\tau) = 1$ if the final answer is correct, and $R(\tau) = 0$ otherwise~\citep{su2025crossing,mroueh2025reinforcement}. The training objective is $J(\theta) = \mathbb{E}_{\tau \sim \hat{\pi}_\theta} [R(\tau)]$, with the policy gradient update:
\begin{equation}
\nabla_\theta J(\theta) = \mathbb{E}_{\tau \sim \hat{\pi}_\theta} \left[ R(\tau) \nabla_\theta \log P(\tau \mid \theta) \right],
\end{equation}
where $P(\tau \mid \theta)$ denotes the probability of trajectory $\tau$ under $\hat{\pi}_\theta$.
Standard RL faces challenges for LLM agents due to \emph{sparse rewards}~\citep{andrychowicz2017hindsight,badia2020never} and \emph{high variance}~\citep{schulman2017proximal,shao2024deepseekmath}, often leading to unstable and inefficient training.

\section{Methodology}
\begin{figure*}[t]
\centerline{
\includegraphics[width=0.975\textwidth]{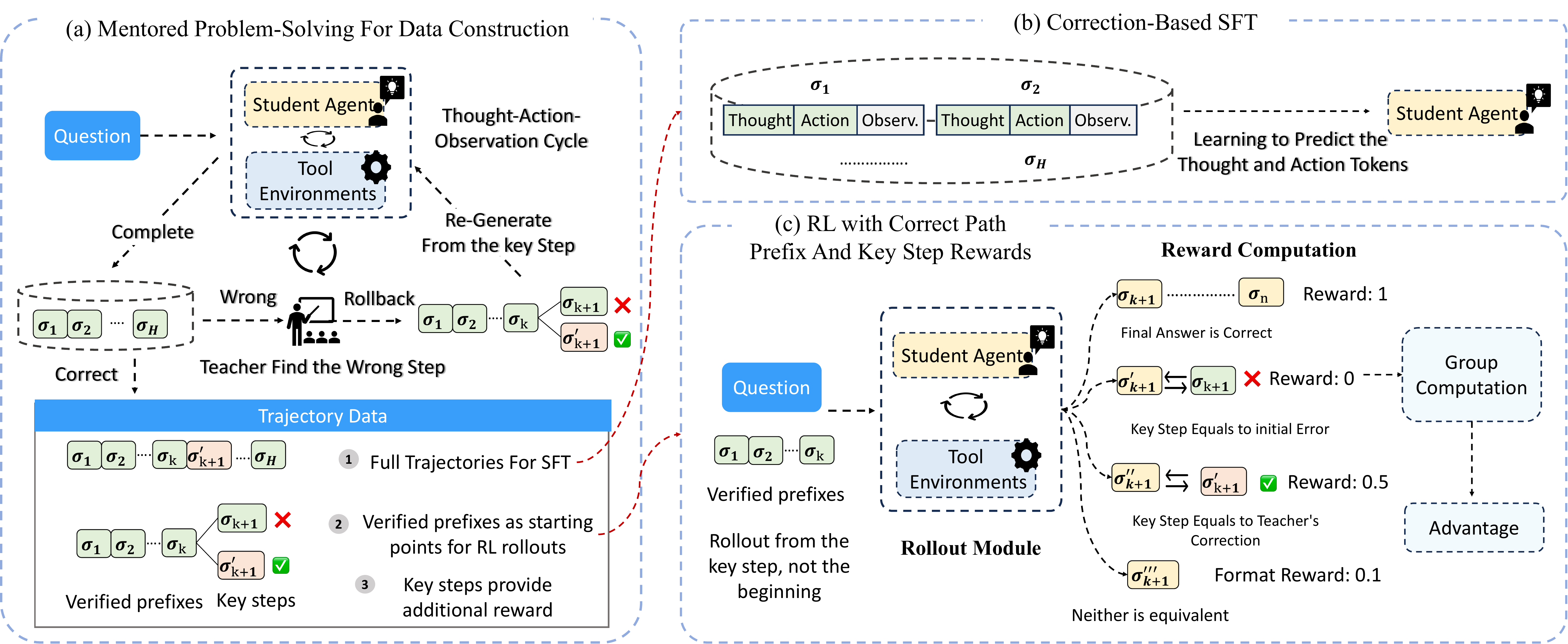}}
\caption{
The \emph{SCoRe} framework.
(a) A student agent attempts a task, and the teacher provides a single-step correction at the first error, creating student-centered training data.
(b) The student is initially trained to imitate full solution trajectories via supervised fine-tuning.
(c) The student is further improved through reinforcement learning, using shortened rollouts starting from the prefix preceding the teacher's correction and targeted rewards at the corrected steps to guide exploration.
}
\label{fig:method}
\end{figure*}

In this work, we propose \emph{SCoRe}, a principled student-centered distillation framework that addresses key limitations of BC and RL (Figure~\ref{fig:method}). Instead of the static ``teacher-acts, student-clones'' scheme, \emph{SCoRe} introduces \emph{Mentored Problem-Solving} (MPS) to place the student at the center of learning. Training on MPS-generated data tailored to the student's abilities reduces BC's compounding error growth, and the use of short-horizon, key-step rewards in RL improves stability and efficiency.

\subsection{Initial Distillation with Code as Action}
To enable a ``student-explores, teacher-corrects'' strategy for agent distillation, the student must first acquire basic \emph{reasoning-acting} skills. However, some LLMs (e.g., Qwen2.5~\citep{yang2024qwen2}, LLaMA3~\citep{dubey2024llama}) are not built for multi-step agentic tasks and typically perform single-pass generation. Therefore, we first construct structured datasets and distill $\hat{\pi}$ to perform multi-step reasoning and acting.

\noindent\textbf{Trajectory Representation.}
Training trajectories follow the ReAct~\citep{yao2023react} format: sequences of triplets $\tau = (\sigma_i)_{i=1}^H, \sigma_i = (t_i, c_i, o_i)$, where $t_i$ is textual reasoning, $c_i$ is an action in ReAct, and $o_i$ is environment feedback. These structured trajectories capture problem decomposition and stepwise solutions, making them well-suited for distillation purposes.
We build on \emph{CodeAct}~\citep{wang2024executable}, where $c_i$ is executable code, offering: (i) deterministic operations for reproducibility; (ii) both the teacher and the student are familiar with code from pre-training, reducing capability gaps; and (iii) a Turing-complete, unified action space for tool use and complex logic.

\noindent\textbf{Initial Trajectory Generation.}
To build the initial training data $D_T$, we use the teacher policy $\pi_E$. Following~\citep{kang2025distilling}, $\pi_E$ is first prompted to produce a high-level plan \texttt{<first\_thought>} as a strategic outline. A second prompt enforces the standard \emph{Thought--Code--Observation} cycle: at step $i$, $\pi_E$ generates $(t_i, c_i)$ conditioned on both the plan and prior steps. Executing $c_i$ in the environment yields an observation $o_i$, which is appended to the context for subsequent reasoning. This dual-prompt design combines global planning with adaptive step-level reasoning. To ensure quality, we apply rejection sampling and retain only trajectories with correct final answers.

\noindent\textbf{Behavior Cloning.}
In the initial cold-start learning phase, we distill the student policy $\hat{\pi}$ on $D_T$ via behavior cloning, minimizing $\mathcal{L}_{\mathrm{BC}}$ as defined in~\eqref{eq:bc}. The resulting initialized model, $\hat{\pi}_{\mathrm{init}}$, learns the \emph{Thought--Code--Observation} loop. This capability enables $\hat{\pi}_{\mathrm{init}}$ to attempt tasks without immediate failure, thus allowing fine-grained, one-step teacher interventions in subsequent phases.

\subsection{Mentored Problem-Solving and Further SFT}
After initialization via BC, the core phase, \emph{Mentored Problem-Solving} (MPS), employs the BC-initialized student as an explorer, producing ability-matched and deficiency-localized trajectory data.

\noindent\textbf{Student Explores, Teacher Corrects.}
As shown in Figure~\ref{fig:method}(a), the initialized student model $\hat{\pi}_{\mathrm{init}}$ attempts unseen tasks. For each task, it independently generates a full trajectory $\tau_S = (\sigma_1, \dots, \sigma_H)$, where each step $\sigma_i$ denotes a triplet of \emph{Thought--Code--Observation}. After generating the final answer, the teacher $\pi_E$ checks correctness \emph{using the student's full reasoning chain}. If the answer is incorrect, the teacher locates the first deviation step $\sigma_k$ where the student diverges from the correct path. Instead of discarding the trajectory, $\pi_E$ provides a minimal intervention by replacing $\sigma_k$ with a corrected step $\sigma'_k$, after which the student resumes execution from $(\sigma_1, \dots, \sigma_{k-1}, \sigma'_k)$.

\noindent{\textbf{Validation of the Correction.}} If the student ultimately completes the task, it indirectly validates the teacher's correction, and the corrected trajectory is collected as training data. If the student makes another mistake at step $m > k$, the teacher will correct it to $\sigma'_m$, and the execution will continue from $(\sigma_1, \dots, \sigma_{k-1}, \sigma'_k, \dots, \sigma_{m-1}, \sigma'_m)$. Each intervention addresses only the specific wrong step, and a single trajectory may contain multiple such interventions. The entire process of error localization and correction is carried out by the teacher via prompts (Appendix~\ref{sec:prompts}).

In rare cases, a task may remain unsolved even after multiple one-step corrections (5 attempts in this paper). Such cases are \textbf{Hard-to-Teach tasks}, as their difficulty exceeds the student's current capability. While these trajectories offer limited value for capability-matched SFT, we retain a subset as challenging examples for RL to improve the model's performance on difficult tasks.

\noindent\textbf{Two Complementary Forms of Supervision.}
The MPS process yields two complementary forms of supervision:
(1) The final \emph{corrected trajectory}, mostly generated by the student with sparse teacher edits, provides capability-aligned demonstrations for continued SFT.
(2) Each key-step correction produces a \emph{preference pair}, anchored on the same prefix $(\sigma_1, \dots, \sigma_{k-1})$, where the teacher's corrected step $\sigma'_k$ is preferred over the student's original $\sigma_k$. A single task can generate multiple such pairs from multiple teacher interventions. These pairs are especially useful for RL methods (e.g., GRPO~\citep{shao2024deepseekmath}), offering near-correct prefixes that stabilize short rollouts and provide precise feedback at prior error points. Once this high-quality, capability-aligned data is collected, it is used to train the student model $\hat{\pi}$ via SFT (see Figure~\ref{fig:method}(b)). Training on such capability-aligned data helps reduce compounding errors; we next provide a theoretical analysis.

\noindent\textbf{Theoretical Justification.}
To analyze \emph{SCoRe}'s advantage in mitigating compounding errors~\citep{ross2011reduction}, we model the agent as a finite-horizon process of length $H$. At each step $t$, in state $s_t \in \mathcal{S}$, the agent selects action $a_t \in \mathcal{A}$ via policy $\pi$. In our framework, $a_t$ is the composite output of thought and code. The environment transitions according to $P(s_{t+1} \mid s_t, a_t)$.
Let $\pi_E$ and $\hat{\pi}$ be the teacher and student policies, and let $d_t^\pi$ be the state distribution under $\pi$.
The per-step cost $c_t(s) \in [0,1]$ is 0 if the policy acts correctly and 1 otherwise. The total expected cost is
\begin{equation}
\label{eq:total-cost}
c(\pi) = \mathbb{E}_{s_1 \sim d_1,\, a_t \sim \pi,\, s_{t+1} \sim P}
\left[ \sum_{t=1}^H c_t(s_t) \right].
\end{equation}

\begin{theorem}[BC compounding-error bound]
\label{thm:bc}
If the student $\hat{\pi}$ trained on teacher $\pi_E$ demonstrations via BC satisfies
\begin{equation}
\label{eq:bc-error}
\mathbb{P}_{s \sim d_t^{\pi_E}}\left[ \hat{\pi}(s) \neq \pi_E(s) \right] \le \varepsilon,
\quad \forall\, t \in \{1,\dots,H\},
\end{equation}
then
\begin{equation}
\label{eq:bc-bound}
c(\hat{\pi}) \le c(\pi_E) + \frac{H(H-1)}{2}\,\varepsilon
= c(\pi_E) + O(H^2\varepsilon).
\end{equation}
\end{theorem}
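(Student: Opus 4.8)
The plan is to prove this by a coupling argument that tracks the first step at which the student deviates from the teacher, following the covariate-shift analysis of \citet{ross2011reduction}. First I would couple the teacher and student trajectories: start both from the same initial state $s_1 \sim d_1$ and, treating $\hat\pi(s)$ and $\pi_E(s)$ as the (deterministic) actions taken at $s$, let them evolve under identical dynamics so that they occupy identical states for as long as they select identical actions. (For stochastic policies the same scheme goes through with an optimal coupling and total-variation distance in place of the disagreement event.) Define $T$ to be the first step at which $\hat\pi$ and $\pi_E$ choose different actions, with $T = H+1$ if they never disagree. The whole argument then reduces to controlling the law of $T$ and accounting for the extra cost incurred after the two trajectories split.

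The key lemma I would establish is that $\mathbb{P}[T = t] \le \varepsilon$ for every $t \in \{1,\dots,H\}$. Let $B_{t-1}$ be the event that the two policies agreed on all of steps $1,\dots,t-1$. On $B_{t-1}$ the shared state at step $t$ is exactly the teacher's state $s_t$, whose unconditional law is $d_t^{\pi_E}$, and a first disagreement at step $t$ means $\hat\pi(s_t)\neq\pi_E(s_t)$. Hence $\mathbb{P}[T=t] = \mathbb{P}\big[B_{t-1}\cap\{\hat\pi(s_t)\neq\pi_E(s_t)\}\big] \le \mathbb{P}_{s\sim d_t^{\pi_E}}[\hat\pi(s)\neq\pi_E(s)] \le \varepsilon$, where the first inequality simply drops the intersection with $B_{t-1}$. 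This is where I expect the main subtlety to lie: the hypothesis in~\eqref{eq:bc-error} bounds the error only under the teacher's \emph{marginal} $d_t^{\pi_E}$, not under the conditional law of the state given agreement, and the argument must exploit that restricting the marginal to the sub-event $B_{t-1}$ can only lower the probability mass, so the bound transfers.

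Next I would decompose the cost gap. Because $c_t$ depends only on the state and the two trajectories share states up to and including step $T$ — they select different actions at step $T$ but are still in the \emph{same} state there, diverging only from step $T+1$ onward — every per-step cost agrees for $t \le T$. Writing the gap as a single expectation over the coupled pair and using $c_t \in [0,1]$,
\[
c(\hat\pi) - c(\pi_E) \;=\; \mathbb{E}\!\left[\sum_{t=T+1}^{H}\big(c_t(s_t^{\hat\pi}) - c_t(s_t^{\pi_E})\big)\right] \;\le\; \mathbb{E}\big[(H-T)^{+}\big].
\]
Substituting the per-step bound on the law of $T$ then gives $\mathbb{E}\big[(H-T)^{+}\big] = \sum_{t=1}^{H}(H-t)\,\mathbb{P}[T=t] \le \varepsilon\sum_{t=1}^{H}(H-t) = \varepsilon\,\tfrac{H(H-1)}{2}$, which is exactly~\eqref{eq:bc-bound}. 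I would flag that obtaining the precise constant $H(H-1)/2$ (rather than $H(H+1)/2$) hinges on the observation that no extra cost is charged at the disagreement step $T$ itself, since the state is still shared there; this is the one place where careless accounting would inflate the leading constant.
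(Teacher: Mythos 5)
Your proof is correct, and it arrives at the exact constant $\tfrac{H(H-1)}{2}\varepsilon$; the underlying mechanism is the same Ross--Bagnell covariate-shift argument the paper uses, but your bookkeeping is genuinely transposed. The paper fixes a step $t$, writes $d_t^{\hat\pi}$ as a mixture $(1-p_{t-1})\,d_t^{\pi_E} + p_{t-1}\,q_t$ with $p_{t-1}\le (t-1)\varepsilon$ by a union bound over all earlier errors, and charges an excess cost of at most $(t-1)\varepsilon$ at each step; you instead fix the first-disagreement time $T$, prove $\mathbb{P}[T=t]\le\varepsilon$ via the coupling, and charge at most $H-t$ excess cost to that event. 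The two sums $\sum_{t=1}^{H}(t-1)\varepsilon$ and $\sum_{t=1}^{H}(H-t)\,\mathbb{P}[T=t]$ are the same double sum read by rows versus by columns. Your key lemma is sound: the inclusion $\{T=t\}\subseteq\{\hat\pi(s_t^{\pi_E})\neq\pi_E(s_t^{\pi_E})\}$ lets you apply the hypothesis under the teacher's \emph{marginal} $d_t^{\pi_E}$ without ever conditioning on the agreement event, which is exactly the subtlety you flag; and your observation that no cost is charged at step $T$ itself (the state is still shared there) is what the paper achieves implicitly by bounding the step-$t$ excess by $p_{t-1}$ rather than $p_t$. What each version buys: the paper's per-step mixture yields a standalone inequality $\mathbb{E}_{s\sim d_t^{\hat\pi}}[c_t(s)]\le \mathbb{E}_{s\sim d_t^{\pi_E}}[c_t(s)]+(t-1)\varepsilon$ that it reuses almost verbatim (with $(t-1)\varepsilon$ replaced by $\varepsilon$) in the proof of the \emph{SCoRe} bound, whereas your stopping-time formulation makes the source of the quadratic growth --- one rare first error followed by up to $H-t$ ruined steps --- and the exact leading constant more transparent. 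Both are rigorous given the paper's convention that $c_t$ is a function of the state alone.
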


This classical covariate-shift result~\citep{ross2011reduction} shows that a small per-step error $\varepsilon$ can grow to $O(H^2\varepsilon)$, severely degrading performance on long-horizon tasks.

\begin{theorem}[\emph{SCoRe} first-error-correction bound]
\label{thm:app-general}
In \emph{SCoRe} training, at the first deviation from teacher policy $\pi_E$, the action is replaced by the teacher's, and execution continues with student policy $\hat{\pi}$.
If, \textbf{under $d_t^{\hat{\pi}}$}, the per-step misalignment satisfies
\begin{equation}
\label{eq:app-error}
\mathbb{P}_{s \sim d_t^{\hat{\pi}}}
\left[ \hat{\pi}(s) \neq \pi_E(s) \right] \le \varepsilon,
\quad \forall\, t \in \{1,\dots,H\}.
\end{equation}
Since \emph{SCoRe}'s training data originates from the student's own rollouts, the relevant error rate is evaluated \textbf{under $d_t^{\hat{\pi}}$ rather than $d_t^{\pi_E}$}. Then
\begin{equation}
\label{eq:app-bound}
c(\hat{\pi}) \le c(\pi_E) + H\,\varepsilon
= c(\pi_E) + O(H\,\varepsilon).
\end{equation}
\end{theorem}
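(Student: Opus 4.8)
The plan is to bound the student's excess cost by decomposing the total cost step by step and then to exploit the first-error correction to keep the teacher--student distribution mismatch confined to a single step, so that the excess accumulates linearly rather than quadratically in $H$. Writing $c_t^{\pi}(s)=\mathbf{1}[\pi\text{ acts incorrectly at }s]$, linearity of expectation gives the exact per-step decomposition
\begin{equation}
c(\hat{\pi}) - c(\pi_E) = \sum_{t=1}^{H}\Big( \mathbb{E}_{s \sim d_t^{\hat{\pi}}}\big[c_t^{\hat{\pi}}(s)\big] - \mathbb{E}_{s \sim d_t^{\pi_E}}\big[c_t^{\pi_E}(s)\big] \Big),
\end{equation}
and the whole argument reduces to showing that each summand is at most $\varepsilon$.

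The central step---and the one that separates this bound from the $O(H^2\varepsilon)$ bound of Theorem~\ref{thm:bc}---is to control the per-step term using the correction mechanism. In the BC setting an early deviation pushes the learner permanently off the teacher's state distribution, so a single mistake at step $t$ can incur cost at each of the remaining $H-t$ steps, which is exactly what produces the $\sum_t (H-t)=\tfrac{H(H-1)}{2}$ accumulation. Here I would instead set up a coupling between the corrected student rollout and the teacher, run on shared randomness: the two trajectories stay identical as long as the student's proposed action agrees with $\pi_E$, and at the first disagreement the teacher's replacement of the student's step by $\pi_E$'s re-synchronizes the pair onto the teacher's path. Because the deviation is reset immediately rather than propagated through the transitions $P(s_{t+1}\mid s_t,a_t)$, the induced state distribution does not drift, and the only excess charged at step $t$ is the student's own one-step probability of proposing a misaligned action, which by the on-policy hypothesis in~\eqref{eq:app-error} is at most $\varepsilon$ evaluated under $d_t^{\hat{\pi}}$. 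Combining the decomposition with this per-step bound then yields
\begin{equation}
c(\hat{\pi}) - c(\pi_E) \le \sum_{t=1}^{H}\mathbb{P}_{s \sim d_t^{\hat{\pi}}}\big[\hat{\pi}(s)\neq \pi_E(s)\big] \le \sum_{t=1}^{H}\varepsilon = H\varepsilon,
\end{equation}
which is the claimed $O(H\varepsilon)$ result.

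I expect the main obstacle to lie in making the coupling argument fully rigorous: one must verify that the correction genuinely confines the distribution shift to a single step, so that the per-step excess is measured against the on-policy law $d_t^{\hat{\pi}}$ rather than compounding through the dynamics, and that stating the misalignment bound under $d_t^{\hat{\pi}}$ (as the hypothesis requires) is consistent with the corrected rollout's state distribution. Tracking these bookkeeping details---in particular checking that multiple interventions within a single trajectory preserve the linear, non-accumulating accounting---is where the argument must be most careful, since this is precisely the place where the BC analysis loses a factor of $H$.
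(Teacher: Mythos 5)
Your proposal is correct and takes essentially the same route as the paper's proof: both decompose $c(\hat{\pi})-c(\pi_E)$ into per-step differences and bound each summand by $\varepsilon$ using the on-policy misalignment assumption under $d_t^{\hat{\pi}}$. The paper justifies the per-step bound by conditioning on the agreement event $\{\hat{\pi}(s)=\pi_E(s)\}$ rather than by an explicit coupling, but the substance---including the one informal step, namely that on agreement the per-step cost aligns with the teacher's cost under $d_t^{\pi_E}$---is the same.
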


Unlike BC in Theorem~\ref{thm:bc}, \emph{SCoRe} trains on $d_t^{\hat{\pi}}$ and the teacher's first-error corrections, truncating error propagation.
At most one unchecked mistake occurs before resuming an expert-aligned path, reducing worst-case growth from $O(H^2)$ to $O(H)$. The proof and a discussion of the underlying assumptions are provided in Appendix~\ref{sec:proof}.

\subsection{RL Refinement for Mastery}
\label{sec:RL}
Although MPS yields capability-aligned supervision, training the student on this data only via SFT still limits the student to replicating the teacher's key steps. To move from imitation to independent problem-solving, we introduce an RL phase after SFT that directly optimizes task success.
We adopt GRPO~\citep{shao2024deepseekmath}, a variant of PPO~\citep{schulman2017proximal} that omits the value function, thereby reducing computation and value-estimation instability. However, for long-horizon tasks, GRPO remains sensitive to sparse rewards and high gradient variance. As Figure~\ref{fig:method}(c) shows, we address these issues with two techniques:

\noindent\emph{Short-Horizon Rollout.}
Instead of starting rollouts from the initial question, we begin from the verified prefix $(\sigma_1, \dots, \sigma_{k-1})$: the sequence preceding the original error step $\sigma_k$. This shortens the horizon from $H$ to $H' = H - (k - 1)$, reducing variance.

\noindent\emph{Key-Step Reward.}
If the final answer is correct, a large reward $R_{\mathrm{final}}$ is given. Otherwise,
\[
R =
\begin{cases}
R_{\mathrm{key}}, & a_k = a_k^{\pi_E},\\
R_{\mathrm{format}}, & a_k \neq a_k^{\mathrm{orig}} \land a_k \neq a_k^{\pi_E},\\
0, & \text{otherwise},
\end{cases}
\]
where $a_k^{\mathrm{orig}}$ is the student's original error, and $a_k^{\pi_E}$ is the teacher's correction. $R_{\mathrm{format}}$ follows the format-reward design of DeepSeek-R1~\citep{guo2025deepseek}, with a slightly stricter condition: it is granted only when the output is both format-correct and different from the original error. Action equivalence can be reliably checked via code and code execution results.

In practice, we compute the reward via a lightweight LLM-based verifier: \texttt{Qwen2.5-7B-Instruct} judges semantic equivalence between the generated and reference answers. This reward scheme provides informative credit assignment at the student's weakest step while maintaining prioritization of final task success.

As noted earlier, we retain a subset of Hard-to-Teach samples (10\% of the data) specifically for RL training. The teacher provides no useful guidance on them, and the student is free to search for strategies that go beyond the teacher's behavior, rather than imitating it.

\noindent\textbf{Theoretical Justification.}
We analyze variance reduction under policy gradients (ignoring value-function or clipping for clarity), aligning with GRPO's core update. The conclusion can also extend to the clipped objective of GRPO. Specifically, starting rollouts after the verified prefix reduces the remaining horizon from $H$ to $H-k+1$, tightening the variance bound of the policy gradient:
\begin{equation}
\label{eq:pg-combined}
g_k = \sum_{t=k}^H \nabla_\theta \log \pi_\theta(a_t \mid s_t) \cdot G_t, \quad G_t = \sum_{t'=t}^H \gamma^{t'-t} r_{t'}.
\end{equation}
$\gamma \in (0,1)$ is the discount factor. Assume that the reward satisfies $|r_t| \le R_{\max}$ and $\|\nabla_\theta \log \pi_\theta(a_t \mid s_t)\| \le G_{\max}$.

\begin{theorem}[Variance Bound for Shortened Rollout]
\label{prop:variance}
Under these assumptions, there exists $C = G_{\max}^2 R_{\max}^2$ such that
\begin{equation}
\label{eq:var-bound}
\mathrm{Var}[g_k] \le \frac{C}{(1-\gamma)^2} \left( (H-k+1) - \frac{\gamma(1-\gamma^{H-k+1})}{1-\gamma} \right)^2,
\end{equation}
which decreases monotonically as $k$ increases. The proof is in Appendix~\ref{sec:proof}.
\end{theorem}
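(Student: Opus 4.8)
The plan is to read $\Var[g_k]$ as the trace of the covariance matrix, $\Var[g_k] = \E[\|g_k - \E[g_k]\|^2] = \E[\|g_k\|^2] - \|\E[g_k]\|^2$, and then discard the nonnegative term $\|\E[g_k]\|^2$ to obtain $\Var[g_k] \le \E[\|g_k\|^2]$. Because the assumptions bound both the score norm and the per-step rewards, $\|g_k\|$ admits a \emph{deterministic} upper bound depending only on $k,H,\gamma$; its square then dominates $\E[\|g_k\|^2]$, so the entire claim reduces to a clean worst-case estimate of $\sup\|g_k\|$ together with its monotonicity in $k$.

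First I would bound the norm termwise. By the triangle inequality and the score bound, $\|g_k\| \le \sum_{t=k}^H \|\nabla_\theta \log \pi_\theta(a_t\mid s_t)\|\,|G_t| \le G_{\max}\sum_{t=k}^H |G_t|$. Each discounted return is controlled by a finite geometric series, $|G_t| \le R_{\max}\sum_{j=0}^{H-t}\gamma^j = R_{\max}\frac{1-\gamma^{H-t+1}}{1-\gamma}$. Summing over $t$ and applying the substitution $m = H-t+1$ to the geometric part gives $\sum_{t=k}^H (1-\gamma^{H-t+1}) = (H-k+1) - \gamma\frac{1-\gamma^{H-k+1}}{1-\gamma}$, so that $\|g_k\| \le \frac{G_{\max}R_{\max}}{1-\gamma}\big((H-k+1) - \frac{\gamma(1-\gamma^{H-k+1})}{1-\gamma}\big)$. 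Squaring and setting $C = G_{\max}^2 R_{\max}^2$ reproduces \eqref{eq:var-bound}.

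For the monotonicity claim, the cleanest route is to keep the bracketed factor in its summed form $f(k) := \sum_{t=k}^H (1-\gamma^{H-t+1})$, a sum of strictly positive terms since $\gamma\in(0,1)$ forces $1-\gamma^{H-t+1}>0$. Raising $k$ by one deletes precisely the positive term $1-\gamma^{H-k+1}$, hence $f(k+1) < f(k)$; since $f(k)>0$, the bound $\frac{C}{(1-\gamma)^2}f(k)^2$ decreases strictly in $k$. I expect no real obstacle in this argument---the only points requiring care are the index shift $m=H-t+1$ in the double geometric sum and the justification that viewing $\Var[\cdot]$ as the covariance trace legitimizes the crude step $\Var[g_k]\le(\sup\|g_k\|)^2$. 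The bound is intentionally worst-case, discarding all cancellation among the score--return products, so its value lies in the explicit dependence on the truncation index $k$ rather than in sharpness.
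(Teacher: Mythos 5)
Your proposal is correct and follows essentially the same route as the paper's proof: bound $\Var[g_k]$ by the second moment, apply the triangle inequality with the score and reward bounds, evaluate the geometric sum via the index shift $j=H-t+1$, and square the resulting deterministic bound. Your explicit term-deletion argument for monotonicity in $k$ is a small but welcome addition, since the paper merely asserts that the bound decreases.
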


In summary, starting rollouts from verified prefixes focuses on students' weak points and reduces gradient variance by truncating the horizon, leading to more stable RL training.

\section{Experiments}

\begin{table*}[t]
\centering
\caption{Overall performance on eight challenging reasoning tasks for Student: Qwen2.5-7B-Instruct. The best results among same-size models are bolded. ``Avg.'' denotes the average across tasks. Results for GRPO and ARPO are mostly from~\citet{dong2025agentic}, and all evaluation settings follow~\citet{dong2025agentic}. \textbf{More base models and distillation baselines are reported in Appendix Tables~\ref{tab:reasoning_appendix} and~\ref{tab:math_results}.} Since most reasoning benchmarks were released before 2024, we do not use Qwen3-8B (training data cutoff to October 2024) due to potential data leakage risks. AIME24 and AIME25 results are averaged over five runs due to their small size.}
\label{tab:reasoning}
\setlength{\tabcolsep}{8pt}
\resizebox{0.975\linewidth}{!}{
\begin{tabular}{cccccccccc}
\toprule
\multirow{2}{*}{Method} & \multicolumn{4}{c}{\textbf{Mathematical Reasoning}} & \multicolumn{4}{c}{\textbf{Factual Reasoning}} & \multirow{2}{*}{\textbf{Avg.}} \\
\cmidrule(lr){2-5} \cmidrule(lr){6-9}
 & AIME24 & AIME25 & MATH500 & OlymMath & HQA & 2Wiki & MuSiQ & Bamb & \\
\midrule
\multicolumn{10}{c}{\textbf{Larger Models (TIR Prompting)}}  \\
\midrule
DeepSeek-V3 (671B) & 44.0 & 33.3 & 84.8 & 21.0 & 60.1 & 78.5 & 37.0 & 73.2 & 54.0  \\
Qwen2.5-72B-Instruct     & 32.7 & 35.3 & 77.4 & 17.0 & 60.5 & 75.5 & 36.8 & 73.2 & 51.1  \\
\midrule
\multicolumn{10}{c}{\textbf{Student: Qwen2.5-7B-Instruct}}  \\
\midrule
Behavior Cloning~(BC) & 22.0 & 14.7 & 72.8 & 15.5  & 58.1 & 70.3 & 26.6 & 63.6 & 43.0 \\
BC + GRPO              & 22.6 & 26.0 & 78.0 & 25.0   & 59.0 & 76.1 & 30.6 & 68.4 & 48.2 \\
BC + ARPO              & 32.0 & 27.4 & 78.8 & 18.0 & 58.8 & 76.1  & 31.1 & 71.5 & 49.2 \\
\rowcolor{blue!10} SCoRe-SFT & 26.7 & 18.7 & 73.4 & 18.5  & 59.5 & 72.8 & 29.2 & 69.8 & 46.1 \\
\rowcolor{blue!10} SCoRe-RL  & \textbf{38.0} & \textbf{28.7} & \textbf{82.0} & \textbf{26.5} & \textbf{61.4} & \textbf{76.8} & \textbf{32.2} & \textbf{72.8} & \textbf{52.3} \\
\bottomrule
\end{tabular}
}
\end{table*}

\begin{table*}[t]
\centering
\caption{Performance on more free-form benchmarks for \textbf{deep search}. Given their difficulty and need for stronger base reasoning, we include the latest Qwen3-8B for evaluation. Since most deep search benchmarks were released in 2025, this poses little risk of data leakage.
More baselines (PRM-style, On-Policy Distillation, ReLIFT~\cite{ma2025learning}) are presented in Table~\ref{tab:more_baselines_deep_search} of Appendix~\ref{sec:more_results}.}
\label{tab:deep_search}
\setlength{\tabcolsep}{10pt}
\resizebox{0.67\textwidth}{!}{
\begin{tabular}{cccccc}
\toprule
Method & GAIA & HLE & XBench & WebWalker & Avg. \\
\midrule

\multicolumn{6}{c}{\textbf{Larger Models (TIR Prompting)}} \\
\midrule
DeepSeek-V3 (671B)      & 40.8 & 10.6 & 32.0 & 46.0 & 32.4 \\
Qwen2.5-72B-Instruct    & 32.0 &  7.8 & 31.0 & 38.5 & 27.3 \\

\midrule
\multicolumn{6}{c}{\textbf{Student: Qwen3-8B (Non-thinking)}} \\
\midrule
Behavior Cloning~(BC)   & 27.2 & 8.4 & 21.0 & 34.5 & 22.8 \\
BC + GRPO               & 32.0 & 7.8 & 20.0 & 29.0 & 22.2 \\
BC + ARPO               & 38.8 & 8.8 & 25.0 & 30.5 & 25.8 \\

\rowcolor{blue!10} SCoRe-SFT & 28.2 & 10.0 & 22.0 & 41.5 & 25.4 \\
\rowcolor{blue!10} SCoRe-RL  & \textbf{40.8} & \textbf{11.0} & \textbf{27.0} & \textbf{43.0} & \textbf{30.5} \\

\bottomrule
\end{tabular}
}
\end{table*}

\subsection{Experimental Setup}

\begin{table*}[t]
\centering
\caption{Ablation study of various components of \emph{SCoRe}. Student: \texttt{Qwen2.5-7B-Instruct}.}
\label{tab:ablation}
\setlength{\tabcolsep}{4pt}
\resizebox{0.975\textwidth}{!}{
\begin{tabular}{cccccccccccc}
\toprule
\multirow{2}{*}{Phase} & \multirow{2}{*}{Design} & \multicolumn{4}{c}{\textbf{Mathematical Reasoning}} & \multicolumn{4}{c}{\textbf{Factual Reasoning}} & \multirow{2}{*}{\textbf{Avg.}} \\
\cmidrule(lr){3-6} \cmidrule(lr){7-10}
& & AIME24 & AIME25 & MATH500 & OlymMath & HQA & 2Wiki & MuSiQ & Bamb & \\
\midrule
SFT & Initial Distillation (BC init)    &  20.0 & 23.3 & 70.8 & 7.0 & 54.4 & 67.6 & 27.0 &  63.7 & 41.7  \\ 
SFT & SFT on MPS data (SCoRe-SFT) & 26.7 & 18.7 & 73.4 & 18.5  & 59.5 & 72.8 & 29.2 & 69.8 & 46.1 \\
RL & RL W/O short-horizon    &  30.0   &  20.0  & 78.0   & 25.0 &  58.0   &  75.3   &  30.2   & 70.7   & 48.4  \\ 
RL & RL W/O key-step rewards         &  33.3  &  20.0  & 80.6   &  24.0  &  61.1  &  76.7  &   30.3  &  71.7   & 49.7 \\ 
\rowcolor{blue!10}  RL & Full design (SCoRe-RL)  & \textbf{38.0} & \textbf{28.7} & \textbf{82.0} & \textbf{26.5} & \textbf{61.4} & \textbf{76.8} & \textbf{32.2} & \textbf{72.8} & \textbf{52.3} \\
\bottomrule
\end{tabular}
}
\end{table*}


\noindent\textbf{Datasets.}
We evaluate our approach on three categories of datasets.
\emph{Mathematical Reasoning}: AIME2024, AIME2025, MATH500~\citep{lightman2023let,hendrycks2021measuring}, and OlymMath~\citep{sun2025challenging}. These problems often require multi-step reasoning and frequent use of a code interpreter.
\emph{Factual Reasoning}: HotpotQA (HQA), 2WikiMultihopQA (2Wiki)~\citep{yang2018hotpotqa,ho2020constructing}, Musique (MuSiQ)~\citep{trivedi2022musique}, and Bamboogle (Bamb)~\citep{press2022measuring}. These tasks demand multi-step information retrieval and synthesis via search APIs.
For \emph{Deep Search}, \emph{which is fully absent from the training data}, we follow WebThinker~\citep{li2025webthinker}'s text-only split and test on GAIA~\citep{mialon2023gaia}, WebWalker~\citep{wu2025webwalker}, Humanity's Last Exam (HLE)~\citep{phan2025humanity}, and xBench~\citep{chen2025xbench}. These benchmarks involve much longer tool-use horizons, making them more challenging than standard reasoning tasks.

\noindent\textbf{Baselines.}
We compare \emph{SCoRe} with the following three categories of baselines: 

(1) \emph{Prompt-Only Large Model Agents}: strong LLMs prompted in a tool-integrated reasoning (TIR) format~\citep{li2025search}, without parameter updates. This setting reflects an approximate upper bound for performance under ideal prompting. We include \texttt{DeepSeek-V3 (671B)} and \texttt{Qwen2.5-72B-Instruct}.

(2) \emph{Behavior Cloning}~\citep{kang2025distilling}\footnote{Our \emph{SCoRe} framework also includes a BC phase for cold-start initialization, but unlike the BC baseline in Table~\ref{tab:reasoning} and~\ref{tab:deep_search}, which uses the full teacher-annotated training set, the BC phase of \emph{SCoRe} uses only 20\% of the data to endow the student with basic reasoning--action skills.}: the student imitates full teacher trajectories, with no exploration or correction.
Note that \emph{SCoRe} uses the same total number of training tasks as Behavior Cloning in Table~\ref{tab:reasoning} and~\ref{tab:deep_search}. 

(3) \emph{Trajectory-Level RL Algorithm}: training the student with RL over the full horizon with sparse task-completion rewards, using the GRPO~\citep{shao2024deepseekmath} or ARPO~\citep{dong2025agentic} algorithm. Note that RL baselines are also warm-started with BC, rather than trained from scratch.


We also evaluate stronger teacher-guided baselines in Appendix~\ref{sec:more_results}, including a PRM-style step-reward baseline, on-policy distillation, and ReLIFT~\cite{ma2025learning}; these baselines use richer supervision and higher training cost.



Our proposed \emph{SCoRe} is evaluated in two sequential phases: (\emph{i}) \emph{SCoRe-SFT}, a correction-based SFT phase where the model is trained on capability-matched and deficiency-localized data; and (\emph{ii}) \emph{SCoRe-RL}, a short-horizon RL phase with rollouts from the verified prefix and key-step rewards for reproducing the teacher's fix or avoiding errors.

\noindent\textbf{Evaluation.}
To match the evaluation settings of ARPO~\citep{dong2025agentic}, we measure open-domain QA tasks using a token-level F1 score against the ground truth. For mathematical reasoning and deep search tasks, correctness is judged by \texttt{Qwen2.5-72B-Instruct} by comparing the generated final answer with the provided ground truth. In Table~\ref{tab:math_results} of Appendix~\ref{sec:more_results}, \textbf{we also report Exact Match (EM)} for a more objective evaluation.

\noindent\textbf{Implementation.}
During training and inference, the model employs two tools: a Python code interpreter for mathematical reasoning and precise computation, and an online search engine providing concise, up-to-date snippets. To save time and cost, \emph{we did not use a web browser in deep search tasks}.
SFT training data follows the \texttt{[Thought-Action-Observation]} format with \texttt{Qwen2.5-72B-Instruct} as the teacher. To construct the training data, we collect seed question--answer pairs mainly from the Tool-Star dataset~\citep{dong2025tool}, including NuminaMath~\citep{li2024numinamath} and Omni~\citep{gao2024omni} for math reasoning, and HotpotQA, 2WikiMultiHopQA, and WebWalker for factual QA. Of these, 20\% of the trajectories are fully teacher-annotated to create high-quality BC initialization data, giving the student a solid foundation in reasoning--acting skills. The remaining 80\% are generated via MPS, producing capability-matched, deficiency-localized data.
These minimally corrected trajectories are split evenly: half for correction-based SFT to produce the \emph{SCoRe-SFT} model, and half for RL. Note that during MPS, a substantial portion of the data can be solved correctly by the student without any teacher intervention. We retain only a small subset of these easy cases for training.

During RL training, the maximum number of rollout steps was set to 8, and any reasoning--action cycles exceeding this limit were considered incomplete. The same configuration was used during inference. For the key-step reward scale $R_{\mathrm{key}}$, we set it to 0.5 based on a small-scale 2k-sample search over $\{0.3, 0.5, 0.7\}$ on math benchmarks, where 0.5 achieved the best performance. For the format reward $R_{\mathrm{format}}$, we use the standard value of 0.1. Additional details are provided in Appendix~\ref{sec:details}.

\subsection{Main Results}


\noindent\textbf{TIR prompting excels on ultra-large models but is costly.}
On reasoning tasks, Tool-Integrated Reasoning (TIR) is effective for very large models: \texttt{DeepSeek-V3 (671B)} scores 54.0 on average, and \texttt{Qwen2.5-72B-Instruct} scores 51.1, showing how TIR combines strong LLM reasoning with the precision of external tools. But in agent settings, such models also incur high latency and token costs, which grow with both token volume and interaction turns.


\noindent\textbf{SCoRe matches teacher performance on reasoning and deep search with smaller backbones.}
Tables~\ref{tab:reasoning} and~\ref{tab:deep_search} report results on reasoning and deep search. On smaller backbones, our method uses MPS trajectories from a student-led process with one-step teacher correction, and trains the model with correction-based SFT and short-horizon RL with key-step rewards. This yields large gains: averaged over math and factual reasoning benchmarks, \emph{SCoRe-RL} with \texttt{Qwen2.5-7B-Instruct} scores 52.3, and outperforms BC by 9.3 and GRPO by 4.1. Similar trends hold for other base models (\texttt{Qwen2.5-3B-Instruct} and \texttt{Llama-3.1-8B-Instruct}; Appendix Table~\ref{tab:reasoning_appendix}). Ability-aligned training thus enables small models to match much larger ones at lower cost.

\noindent\textbf{SCoRe-SFT outperforms behavior cloning under the same data budget.}
With only teacher-annotated trajectories, BC performance is limited. Under the same data budget, \emph{SCoRe-SFT} yields consistent gains across backbones on both reasoning and deep search, showing that MPS-constructed data reduces compounding error growth from ${O}(H^2)$ to ${O}(H)$, 
where $H$ is the horizon length. This also shows that even without RL, training on correction-based trajectories substantially outperforms BC distillation.

\noindent\textbf{RL with short-horizon and key-step rewards drives strong performance.}
On the more challenging Deep Search tasks (Table~\ref{tab:deep_search}), \emph{SCoRe-RL} achieves very strong performance and in some cases exceeds the teacher. For \texttt{Qwen3-8B} (Non-thinking mode), it scores 30.5 (+7.7 over BC, +8.3 over GRPO).
This is expected: unlike imitation learning, \emph{the learning signal in RL comes not only from the teacher but also from the environment}. Since the teacher provides only local corrections rather than an optimal policy, \emph{SCoRe-RL} can explore beyond the teacher's strategies, achieving a large reward $R_{\mathrm{final}}$ (as described in Section~\ref{sec:RL}).

Although the RL baseline is competitive, \emph{SCoRe-RL} remains better. Moreover, this advantage is not simply due to extra teacher guidance or compute: in Appendix Table~\ref{tab:more_baselines_deep_search}, \emph{SCoRe-RL} also outperforms stronger teacher-guided baselines such as PRM-style step rewards and on-policy distillation.



\subsection{Ablation Study}

In the ablation study, we evaluate the contribution of each \emph{SCoRe} component on reasoning benchmarks (Table~\ref{tab:ablation}).
The BC-initialized student (Initial Distillation), trained on a subset of high-quality teacher trajectory data, serves as the explorer for MPS. Unlike the BC baselines in Tables~\ref{tab:reasoning} and~\ref{tab:deep_search}, which use the full teacher data, this model performs poorly. In contrast, when the initial model is treated as the explorer and fine-tuned on data generated via MPS, performance improves substantially (\emph{SCoRe-SFT}), showing that training on such data effectively reinforces weak links in the reasoning chain. Extending to RL brings further improvements:
(1) \emph{Without short-horizon rollouts}, stability degrades and performance drops notably, showing that truncation reduces gradient-estimation variance;
(2) \emph{Without key-step rewards}, accuracy on multi-step tasks decreases, validating the need for targeted, stepwise guidance beyond final-task rewards.

The full \emph{SCoRe-RL} approach, combining both short-horizon rollouts and key-step rewards, achieves the best results.

Although the full \emph{SCoRe} pipeline requires extra compute due to MPS, the ablations in Table~\ref{tab:ablation} are compute-matched, using the same MPS-generated data and the same GRPO training pipeline as ours. Hence, the gains of the full design are not simply a byproduct of higher training cost.

\begin{figure}[t]
\centerline{
\includegraphics[width=0.475\textwidth]{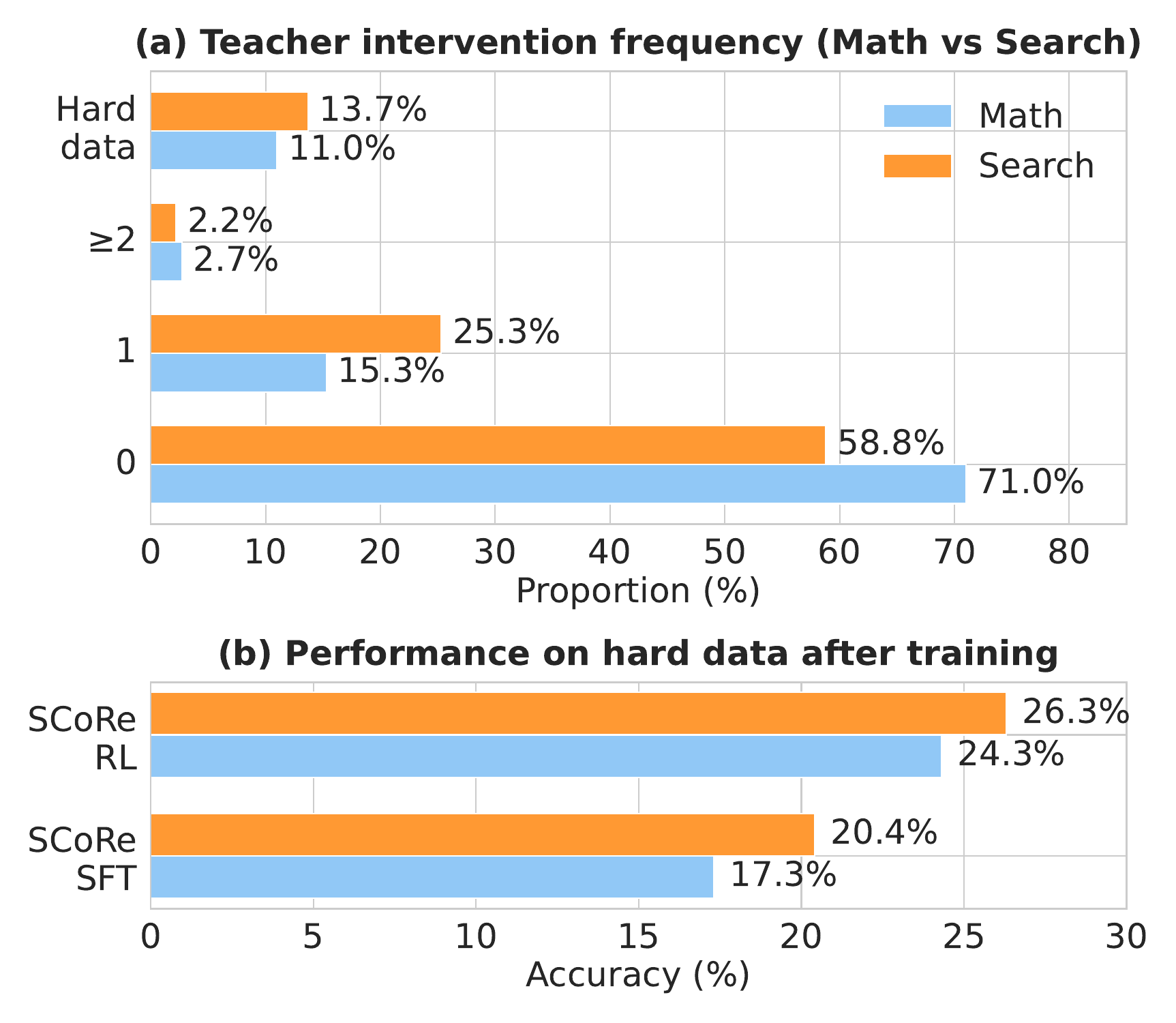}} 
\caption{
Teacher intervention frequency during the MPS phase, and performance on Hard-to-Teach data after training. 
Categories: 
0 = solved by student alone, 
1 = 1 teacher correction, 
$\geq$2 = 2 or more corrections. 
Hard data = unsolved cases, even with the teacher's help.
}
\label{fig:intervention} 
\end{figure}

\subsection{RL Helps Solve Previously Hard-to-Teach Cases}

We further analyze teacher--student interaction during MPS (Figure~\ref{fig:intervention}(a)). Most tasks require only a single teacher correction, while very few need $\geq 2$ interventions, suggesting that one-step correction is usually sufficient. Cases where the student still fails after several one-step corrections are labeled as \emph{Hard-to-Teach}. Although these cases receive no teacher guidance and make up a small fraction of the data, we retain a subset for RL training rather than discard them.

This design is also cost-efficient. The average token usage per teacher intervention, including both prompt and output tokens, is 811 for math tasks and 1260 for search tasks when only one intervention is needed, rising to 1275 and 2046, respectively, when two interventions are required.


As shown in Figure~\ref{fig:intervention}(b), on a held-out Hard-to-Teach subset (200 items excluded from training), SFT on MPS data raises performance from 0\% to 17.3\% on math and 20.4\% on search, while RL further improves it to 24.3\% and 26.3\%, respectively. This suggests that Hard-to-Teach during MPS does not mean unlearnable: on-policy RL can further improve the policy and solve some previously unsolved cases.


To better understand these gains, we manually inspected 50 cases that were Hard-to-Teach under MPS but became solvable after RL. Only 12\% were best explained by generation instability, such as a previously malformed tool call being formatted correctly after RL. In most cases, the improvement reflected interpretable strategy refinement, including better decomposition, retrieval, or added verification. Table~\ref{tab:verification_case} shows one example: RL prompts the model to verify before submitting the answer, avoiding its earlier mistake. These findings suggest that the gains stem primarily from interpretable strategy learning rather than randomness. More broadly, RL's on-policy nature allows the student's agentic capabilities to improve gradually over training, rather than solving Hard-to-Teach cases through sudden ``epiphanies.''

\begin{table}[t]
\centering
\caption{A representative Hard-to-Teach case that becomes solvable after RL.}
\label{tab:verification_case}
\resizebox{0.49\textwidth}{!}{
\begin{tabular}{p{0.49\textwidth}}
\toprule
\textbf{Question:} What role is the actor who starred with Glynn Turman in \emph{J.D.'s Revenge} best known for? \\
\textbf{Labeled Answer:} Gunnery Sergeant Emil Foley \\
\midrule
\textbf{Before RL:} Identify actors in \emph{J.D.'s Revenge} $\rightarrow$ retrieve a plausible co-star (e.g., \emph{Louis Gossett Jr.}) $\rightarrow$ retrieve multiple well-known roles for that actor $\rightarrow$ return the first plausible one without further checking. \\

\textbf{After RL:} Identify actors in \emph{J.D.'s Revenge} $\rightarrow$ retrieve \emph{Louis Gossett Jr.} as the relevant co-star of Glynn Turman $\rightarrow$ observe multiple plausible notable roles $\rightarrow$ issue an extra verification query for \emph{Gunnery Sergeant Emil Foley} $\rightarrow$ use the verified result as the final answer. \\

\textbf{Observed change:} RL adds an explicit verification step, rather than merely reproducing teacher corrections. \\
\bottomrule
\end{tabular}
}
\end{table}

\section{Conclusion}

We introduce \emph{SCoRe}, a framework for distilling LLM agents that lets students actively explore problem-solving with minimal, targeted teacher corrections. This yields training data aligned with the student's evolving capabilities and reveals weaknesses for further optimization, helping students advance from imitation to genuine problem-solving. Extensive experiments show that \emph{SCoRe} achieves expert-level performance and consistently outperforms standard distillation baselines. Future work includes improving reward design and extending the framework to broader multimodal tasks.

\section*{Limitations}


Despite reducing compounding error, the method does not eliminate it. The $O(H)$ result should therefore be interpreted as a conditional theoretical characterization under the stated assumptions, rather than as a guarantee that distribution shift is fully resolved in practice.
The framework also depends on teacher quality during MPS. A weaker teacher may reduce the amount of usable training data, but the validation mechanism typically filters out incorrect corrections, so its effect is more directly on data yield than on supervision quality.


\section*{Impact Statement}
This paper presents work whose goal is to advance the field of machine learning. There are many potential societal consequences of our work, none of which we feel should be specifically highlighted here.

\section*{Acknowledgments}
This work was supported in part by the grants from National Science and Technology Major Project (No. 2023ZD0121104), and the Anhui Natural Science Foundation (No. 2508085ZD006).


\bibliography{example_paper}
\bibliographystyle{icml2026}

\newpage
\appendix
\onecolumn

\section*{The Use of Large Language Models~(LLMs) in Writing}

An LLM (i.e., OpenAI's GPT-5) was used solely for minor language editing, including grammar correction and slight rephrasing for clarity. It did not contribute to the research design, and all scientific content is entirely the authors' own.


\section{Relation to Closely Related Work}
\label{sec:closely_related_work}
\textbf{Why traditional DAgger cannot be directly used in agent distillation:} DAgger~\cite{ross2011reduction} was designed in \textbf{2011} for classical imitation-learning domains (e.g., robotic control), where (i) the state is well-defined and can be judged locally, and (ii) expert labels are treated as reliable supervision. These assumptions break down for LLM agents that act through long-horizon, partially observable, tool-augmented interaction.

Our \emph{SCoRe} method evaluates the entire trajectory and identifies mistakes only after seeing the global trajectory, which is necessary to correctly pinpoint where errors occurred. In contrast, DAgger forces the teacher to correct the agent at every step, even though many actions reveal themselves as right or wrong only later through tool feedback or final success or failure. This is suitable for traditional imitation learning but not for agent distillation. Moreover, the teacher model itself can make mistakes or contain noise; DAgger fully trusts the teacher, whereas \emph{SCoRe} automatically filters out teacher errors, which is crucial.

\textbf{Other Closely Related Work:} \emph{SCoRe} is related to prior correction-based methods such as LEMMA~\cite{pan2025lemma}, which identify an erroneous step and regenerate from that point. Its key distinction is that \emph{SCoRe} remains student-centered, which is also the main focus of this paper: after correction, the student itself continues the trajectory, and the resulting verified prefix is further used for short-horizon RL. This strict student-centered design is required by our theory.

\emph{SCoRe} is also related to PRM-style methods with step-level rewards~\cite{zhang2025lessons}, but it does not train a separate process reward model. Instead, it uses first-error localization on student-generated trajectories to define both rollout starting points and localized rewards.

\section{Proofs of Theoretical Results}
\label{sec:proof}

\textbf{Remark on assumptions.} Our analysis adopts the same standard finite-horizon assumptions used in the DAgger compounding-error analysis~\cite{ross2011reduction}, including bounded per-step costs and per-step policy deviation probabilities under the relevant state distribution. These assumptions are not introduced specifically for \emph{SCoRe}; they are the same classical assumptions under which the widely adopted $O(H^2)$ compounding-error bound for behavior cloning is derived. Our $O(H)$ result should be interpreted as a comparative bound within this established framework, rather than as an unconditional guarantee for all practical LLM-agent settings.

\begin{theorem}[BC compounding-error bound]
\label{thm:bc-app}
Suppose the student policy $\hat{\pi}$ has a one-step deviation probability of at most $\varepsilon$ under the expert's state distribution, i.e.,
\[
\mathbb{P}_{s \sim d_t^{\pi_E}}\big(\hat{\pi}(s) \neq \pi_E(s)\big) \le \varepsilon,
\]
and that per-step costs are bounded as $c_t \in [0,1]$. Then the student policy satisfies
\begin{equation}
c(\hat{\pi}) \le c(\pi_E) + \frac{H(H-1)}{2}\,\varepsilon
= c(\pi_E) + O(H^2 \varepsilon).
\end{equation}
\end{theorem}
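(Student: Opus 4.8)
The plan is to prove this classical covariate-shift bound by a coupling argument that tracks the first step at which the student's action disagrees with the expert's. First I would construct a coupling of the two trajectories: run $\hat{\pi}$ and $\pi_E$ from the same initial state $s_1 \sim d_1$ and feed them identical transition randomness, so that as long as the two policies have selected the same action on every previous step, their states remain identical. Define $T$ to be the first step at which $\hat{\pi}$ deviates from $\pi_E$ (with $T = \infty$ if they never disagree). The central observation is that on the event $\{T \ge t\}$---no deviation among the first $t-1$ actions---the student's state $s_t$ under the coupling coincides with the expert's state and is therefore distributed according to $d_t^{\pi_E}$.

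Next I would control the deviation probability. Conditioned on $\{T \ge s\}$, the shared state at step $s$ is drawn from $d_s^{\pi_E}$, so the hypothesis \eqref{eq:bc-error} gives $\mathbb{P}[T = s \mid T \ge s] \le \varepsilon$, and hence $\mathbb{P}[T = s] \le \varepsilon$ for each $s$. A union bound over $s = 1, \dots, t-1$ then yields
\[
\mathbb{P}[T < t] \le (t-1)\,\varepsilon .
\]
With this in hand, I would bound the student's per-step expected cost by splitting on whether a deviation has occurred. On $\{T \ge t\}$ the student and expert occupy the same state, so $c_t(s_t^{\hat{\pi}}) = c_t(s_t^{\pi_E})$; on $\{T < t\}$ I use only the crude bound $c_t \le 1$. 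This gives, for each $t$,
\[
\mathbb{E}_{\hat{\pi}}[c_t(s_t)] \le \mathbb{E}_{\pi_E}[c_t(s_t)] + \mathbb{P}[T < t] \le \mathbb{E}_{\pi_E}[c_t(s_t)] + (t-1)\,\varepsilon .
\]

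Finally I would sum over $t = 1, \dots, H$ and use $\sum_{t=1}^{H}(t-1) = H(H-1)/2$ to conclude $c(\hat{\pi}) \le c(\pi_E) + \tfrac{H(H-1)}{2}\,\varepsilon$. The main obstacle---and the reason the bound is quadratic rather than linear in $H$---is justifying the per-step deviation estimate: the hypothesis only controls errors under the expert distribution $d_t^{\pi_E}$, not under the student's own induced distribution $d_t^{\hat{\pi}}$. The coupling is precisely the device that lets me apply the hypothesis, since conditioning on ``no prior deviation'' forces the relevant state to be distributed as $d_t^{\pi_E}$; but this is also what makes each step pay for the accumulated chance of any earlier deviation, producing the $\sum_t (t-1)$ arithmetic series. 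A cleaner but equivalent route would be to bound $\|d_t^{\hat{\pi}} - d_t^{\pi_E}\|_{\mathrm{TV}} \le (t-1)\,\varepsilon$ by induction and then invoke $|\mathbb{E}_{d_t^{\hat{\pi}}}[c_t] - \mathbb{E}_{d_t^{\pi_E}}[c_t]| \le \|d_t^{\hat{\pi}} - d_t^{\pi_E}\|_{\mathrm{TV}}$, which isolates the same covariate-shift mechanism.
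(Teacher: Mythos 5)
Your proof is correct and follows essentially the same route as the paper's: both bound the probability of any deviation before step $t$ by $(t-1)\varepsilon$ via a union bound, charge cost at most $1$ on that event and the expert's per-step cost otherwise, and sum the arithmetic series $\sum_{t=1}^{H}(t-1)=H(H-1)/2$. Your explicit coupling (equivalently, the stopping time $T$) is just a more rigorous rendering of the paper's mixture decomposition $d_t^{\hat{\pi}} = (1-p_{t-1})\,d_t^{\pi_E} + p_{t-1}\,q_t$.
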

The proof below follows the approach of \cite{ross2011reduction}.

\begin{proof}
The core of this proof is to quantify how errors accumulate over time. A single mistake can lead the student policy into a state distribution the teacher has never seen, causing further deviations. We will bound the total cost by analyzing the probability of such deviations at each step.

For any step $t \in \{1,\dots,H\}$, define the event $\mathcal{E}_{t-1}$ as the student making no errors in the first $t-1$ steps, assuming it encounters states from the expert's distribution $d_i^{\pi_E}$:
\begin{equation}
\mathcal{E}_{t-1} := \bigcap_{i=1}^{t-1} \{ \hat{\pi}(s_i) = \pi_E(s_i) \}, 
\quad \text{where } s_i \sim d_i^{\pi_E}.
\end{equation}

The complementary event, $\mathcal{E}_{t-1}^c$, represents at least one error occurring before step $t$. By the union bound:
\begin{equation}
\mathbb{P}(\mathcal{E}_{t-1}^c) 
= \mathbb{P}\!\left(\bigcup_{i=1}^{t-1} \{\hat{\pi}(s_i) \neq \pi_E(s_i)\}\right) 
\le \sum_{i=1}^{t-1} \mathbb{P}(\hat{\pi}(s_i) \neq \pi_E(s_i)) 
\le (t-1)\varepsilon.
\end{equation}

The student's state distribution at step $t$, $d_t^{\hat{\pi}}$, depends on whether an error occurred previously:
\begin{itemize}
    \item If $\mathcal{E}_{t-1}$ occurs (no prior errors), then $d_t^{\hat{\pi}} = d_t^{\pi_E}$.
    \item If $\mathcal{E}_{t-1}^c$ occurs (an error was made), the student's trajectory diverges, leading to an arbitrary worst-case state distribution, denoted $q_t$.
\end{itemize}

By the law of total probability, we can express $d_t^{\hat{\pi}}$ as a mixture:
\begin{equation}
d_t^{\hat{\pi}} = (1 - p_{t-1})\, d_t^{\pi_E} + p_{t-1} \, q_t,
\end{equation}
where $p_{t-1} := \mathbb{P}(\mathcal{E}_{t-1}^c)$.

We can now bound the expected cost for the student at step $t$:
\begin{align}
\mathbb{E}_{s \sim d_t^{\hat{\pi}}}[c_t(s)] 
&= (1 - p_{t-1})\,\mathbb{E}_{s \sim d_t^{\pi_E}}[c_t(s)] 
   + p_{t-1}\,\mathbb{E}_{s \sim q_t}[c_t(s)] \nonumber \\
&\le (1 - p_{t-1})\,\mathbb{E}_{s \sim d_t^{\pi_E}}[c_t(s)] + p_{t-1}\cdot 1 
\quad (\text{as } c_t \in [0,1]) \nonumber \\
&= \mathbb{E}_{s \sim d_t^{\pi_E}}[c_t(s)] 
   - p_{t-1}\,\mathbb{E}_{s \sim d_t^{\pi_E}}[c_t(s)] + p_{t-1} \nonumber \\
&\le \mathbb{E}_{s \sim d_t^{\pi_E}}[c_t(s)] + p_{t-1} \nonumber \\
&\le \mathbb{E}_{s \sim d_t^{\pi_E}}[c_t(s)] + (t-1)\varepsilon.
\end{align}

This shows the difference in expected cost at step $t$ is bounded by the cumulative error probability up to that point.

Summing over all $H$ steps:
\begin{align}
c(\hat{\pi}) - c(\pi_E) 
&= \sum_{t=1}^H \left( \mathbb{E}_{s \sim d_t^{\hat{\pi}}}[c_t(s)] 
  - \mathbb{E}_{s \sim d_t^{\pi_E}}[c_t(s)] \right) \nonumber \\
&\le \sum_{t=1}^H (t-1)\varepsilon \nonumber \\
&= \varepsilon \sum_{k=0}^{H-1} k 
= \varepsilon \cdot \frac{(H-1)H}{2}.
\end{align}

Rearranging yields the result.
\end{proof}


\begin{theorem}[\emph{SCoRe} first-error-correction bound]
\label{thm:app-general-proof}
Let $\hat{\pi}$ be a student policy trained via \emph{SCoRe} with first-error correction, and let $\pi_E$ be the teacher policy. Assume that under the student's own induced state distribution $d_t^{\hat{\pi}}$, the per-step error rate is bounded by $\varepsilon \in [0,1]$:
\begin{equation}
\mathbb{P}_{s \sim d_t^{\hat{\pi}}}
\left[ \hat{\pi}(s) \neq \pi_E(s) \right] \le \varepsilon, 
\quad \forall \, t \in \{1,\dots,H\}.
\end{equation}
Then for any per-step cost function $c_t : \mathcal{S}\to[0,1]$, the student policy’s expected total cost is bounded by
\begin{equation}
c(\hat{\pi}) \le c(\pi_E) + H \varepsilon.
\end{equation}
\end{theorem}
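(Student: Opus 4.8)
The plan is to bound $c(\hat\pi)-c(\pi_E)$ step by step while keeping $c_t$ a general state cost, exploiting that hypothesis \eqref{eq:app-error} controls the error under the student's \emph{own} rollout distribution $d_t^{\hat\pi}$ rather than under $d_t^{\pi_E}$. First I would write the telescoping identity implied by \eqref{eq:total-cost},
\[
c(\hat\pi) - c(\pi_E) = \sum_{t=1}^H \Big( \mathbb{E}_{s \sim d_t^{\hat\pi}}[c_t(s)] - \mathbb{E}_{s \sim d_t^{\pi_E}}[c_t(s)] \Big),
\]
and, since $c_t:\mathcal{S}\to[0,1]$, bound each summand by the total-variation gap between the two step-$t$ occupancy measures,
\[
\mathbb{E}_{s \sim d_t^{\hat\pi}}[c_t(s)] - \mathbb{E}_{s \sim d_t^{\pi_E}}[c_t(s)] \;\le\; \big\lVert d_t^{\hat\pi} - d_t^{\pi_E} \big\rVert_{\mathrm{TV}} \;=:\; \delta_t .
\]
This reduces the theorem to proving the \emph{non-accumulating} estimate $\delta_t \le \varepsilon$ for every $t$, after which summing gives $c(\hat\pi)-c(\pi_E)\le\sum_{t=1}^H\varepsilon = H\varepsilon$.

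Next I would analyze how $\delta_t$ propagates. Writing $P^{\hat\pi}$ and $P^{\pi_E}$ for the one-step state-transition operators induced by the two policies through the shared kernel $P$, I would split
\[
\delta_{t+1} \le \big\lVert P^{\hat\pi} d_t^{\hat\pi} - P^{\pi_E} d_t^{\hat\pi}\big\rVert_{\mathrm{TV}} + \big\lVert P^{\pi_E} d_t^{\hat\pi} - P^{\pi_E} d_t^{\pi_E}\big\rVert_{\mathrm{TV}} .
\]
The first term is at most $\varepsilon$ because the two operators differ only on the disagreement event $\{\hat\pi(s)\neq\pi_E(s)\}$, whose probability is bounded by \eqref{eq:app-error} under $d_t^{\hat\pi}$; the second term is at most $\delta_t$ because a Markov operator is non-expansive in total variation. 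This yields the recursion $\delta_{t+1}\le \delta_t+\varepsilon$, and herein lies the difficulty: unrolled naively it gives only $\delta_t\le (t-1)\varepsilon$ and hence $\sum_t\delta_t=O(H^2\varepsilon)$, exactly reproducing Theorem~\ref{thm:bc} rather than improving on it. To reach $H\varepsilon$ I must instead show that the discrepancy is \emph{reset} at each step, i.e. $\delta_t\le\varepsilon$ uniformly, which is precisely what the first-error-correction mechanism is meant to supply: each corrected deviation places the rollout back onto the teacher's state distribution, so the comparison term $\delta_t$ does not persist into $\delta_{t+1}$.

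The main obstacle is therefore this reset, and it cannot come from \eqref{eq:app-error} alone. Hypothesis \eqref{eq:app-error} governs \emph{how often} $\hat\pi$ disagrees with $\pi_E$ but says nothing about \emph{how long a disagreement keeps the trajectory off the teacher's path}; for a genuinely arbitrary bounded cost, a single early deviation can inflate every subsequent per-step cost, which is consistent only with the weaker $\sum_t\varepsilon(H-t)=O(H^2\varepsilon)$. I would close the gap by building a coupling of the student and teacher rollouts that share all transitions on the agreement event and are re-synchronized by the teacher's correction immediately after a disagreement, so that at each step the two occupancy measures differ on an event of probability at most $\varepsilon$ and no residual mismatch is carried forward. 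Making this re-synchronization rigorous — establishing that after the first-error correction $d_{t+1}^{\hat\pi}$ is re-aligned with $d_{t+1}^{\pi_E}$ up to the fresh $\varepsilon$ mass introduced at step $t+1$, rather than inheriting the accumulated discrepancy — is the crux; it is the step that severs the recursion producing the quadratic term in Theorem~\ref{thm:bc} and isolates each step's contribution at $\varepsilon$, delivering the claimed $c(\hat\pi)\le c(\pi_E)+H\varepsilon$.
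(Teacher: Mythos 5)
Your skeleton is the same as the paper's: both arguments reduce the theorem to the per-step inequality $\mathbb{E}_{s \sim d_t^{\hat{\pi}}}[c_t(s)] \le \mathbb{E}_{s \sim d_t^{\pi_E}}[c_t(s)] + \varepsilon$ and then sum over the horizon. The difference is in how that per-step bound is obtained. The paper splits on the agreement event $\mathcal{G}_t := \{\hat{\pi}(s) = \pi_E(s)\}$ under $d_t^{\hat{\pi}}$, uses $\mathbb{P}(\mathcal{G}_t^c) \le \varepsilon$ and $\mathbb{E}[c_t(s) \mid \mathcal{G}_t^c] \le 1$, and then asserts $\mathbb{E}[c_t(s) \mid \mathcal{G}_t] \le \mathbb{E}_{s \sim d_t^{\pi_E}}[c_t(s)]$, justified only by the verbal remark that when the student agrees with the teacher ``the cost over these states is aligned with the teacher's cost.'' Your route through the total-variation gap $\delta_t := \lVert d_t^{\hat{\pi}} - d_t^{\pi_E} \rVert_{\mathrm{TV}}$ is different in form but equivalent in substance: since $c_t \in [0,1]$, the uniform estimate $\delta_t \le \varepsilon$ is exactly what would legitimize the paper's conditional-expectation step, and your recursion $\delta_{t+1} \le \delta_t + \varepsilon$ (correct as you derive it, via the disagreement bound under $d_t^{\hat{\pi}}$ plus non-expansiveness of the Markov operator in total variation) correctly shows that hypothesis \eqref{eq:app-error} alone only yields the BC-type $O(H^2 \varepsilon)$ accumulation.

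Your diagnosis of the crux is in fact sharper than the paper's proof. Conditional on $\mathcal{G}_t$, the state $s$ is still distributed according to (a restriction of) $d_t^{\hat{\pi}}$, not $d_t^{\pi_E}$, so the paper's inequality $\mathbb{E}[c_t(s) \mid \mathcal{G}_t] \le \mathbb{E}_{s \sim d_t^{\pi_E}}[c_t(s)]$ does not follow from the stated assumptions: it silently encodes the reset property you flag. Indeed, for truly arbitrary $c_t : \mathcal{S} \to [0,1]$ the theorem as formalized fails: take an MDP where, on the teacher's path, the student deviates with probability $\varepsilon$ at each step, all off-path states are absorbing with cost $1$ and the student agrees with $\pi_E$ on them; hypothesis \eqref{eq:app-error} holds, yet $c(\hat{\pi}) - c(\pi_E) = \Theta(H^2 \varepsilon)$. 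So the coupling/re-synchronization step you leave open is precisely the step the paper handles by assertion; neither your proposal nor the paper's proof closes it from the stated premises. A complete proof must formalize the data-collection mechanism as an additional assumption on the rollout process --- e.g., that conditional on agreement at step $t$ (no uncorrected deviation), $s_t$ is distributed as $d_t^{\pi_E}$, which is what first-error correction is designed to enforce --- after which either your coupling or the paper's split immediately gives $\delta_t \le \varepsilon$ and hence the $H\varepsilon$ bound. Your proposal is therefore an accurate map of the argument with its one genuine gap honestly labeled, but it is not a finished proof, and you should be aware that the paper's appendix does not supply the missing step either.
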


\begin{proof}
The key advantage of the \emph{SCoRe} framework is that the student is trained and evaluated on its own state distribution $d_t^{\hat{\pi}}$. This on-policy training avoids the covariate shift issue in standard Behavior Cloning. The proof uses a direct bound on the per-step cost difference by leveraging the on-policy error assumption.

Consider the expected cost of $\hat{\pi}$ at any step $t$. Let
\[
\mathcal{G}_t := \{ \hat{\pi}(s) = \pi_E(s) \}, \quad s \sim d_t^{\hat{\pi}},
\]
i.e., the student takes the same action as the teacher. Its complement $\mathcal{G}_t^c := \{ \hat{\pi}(s) \neq \pi_E(s) \}$ is the error event.

By the law of total expectation,
\begin{equation}
\mathbb{E}_{s \sim d_t^{\hat{\pi}}}[c_t(s)]
= \mathbb{E}[c_t(s) \mid \mathcal{G}_t] \cdot \mathbb{P}(\mathcal{G}_t)
+ \mathbb{E}[c_t(s) \mid \mathcal{G}_t^c] \cdot \mathbb{P}(\mathcal{G}_t^c).
\end{equation}

From the assumption,
\begin{equation}
\mathbb{P}(\mathcal{G}_t^c) \le \varepsilon, 
\quad \text{so} \quad 
\mathbb{P}(\mathcal{G}_t) \ge 1 - \varepsilon.
\end{equation}

Now, we bound the conditional expectations:
\begin{itemize}
    \item If $\mathcal{G}_t^c$ occurs, the cost is at most $1$ since $c_t \in [0,1]$. Thus, $\mathbb{E}[c_t(s)\mid \mathcal{G}_t^c] \le 1$.
    \item If $\mathcal{G}_t$ occurs, the student follows the teacher's action. The cost over these states is aligned with the teacher's cost. Hence,
    \[
    \mathbb{E}[c_t(s)\mid \mathcal{G}_t] \le \mathbb{E}_{s \sim d_t^{\pi_E}}[c_t(s)].
    \]
\end{itemize}

Plugging these bounds in:
\begin{align}
\mathbb{E}_{s \sim d_t^{\hat{\pi}}}[c_t(s)]
&\le \mathbb{E}_{s \sim d_t^{\pi_E}}[c_t(s)] \cdot \mathbb{P}(\mathcal{G}_t)
+ 1 \cdot \mathbb{P}(\mathcal{G}_t^c) \nonumber \\
&\le \mathbb{E}_{s \sim d_t^{\pi_E}}[c_t(s)] (1 - \mathbb{P}(\mathcal{G}_t^c))
+ \mathbb{P}(\mathcal{G}_t^c) \nonumber \\
&= \mathbb{E}_{s \sim d_t^{\pi_E}}[c_t(s)]
- \mathbb{P}(\mathcal{G}_t^c) \cdot \mathbb{E}_{s \sim d_t^{\pi_E}}[c_t(s)]
+ \mathbb{P}(\mathcal{G}_t^c).
\end{align}

As $\mathbb{E}_{s \sim d_t^{\pi_E}}[c_t(s)] \ge 0$, the negative term can be dropped:
\begin{equation}
\mathbb{E}_{s \sim d_t^{\hat{\pi}}}[c_t(s)]
\le \mathbb{E}_{s \sim d_t^{\pi_E}}[c_t(s)] + \mathbb{P}(\mathcal{G}_t^c).
\end{equation}

Applying the error bound $\mathbb{P}(\mathcal{G}_t^c) \le \varepsilon$, we get the per-step inequality:
\begin{equation}
\mathbb{E}_{s \sim d_t^{\hat{\pi}}}[c_t(s)]
\le \mathbb{E}_{s \sim d_t^{\pi_E}}[c_t(s)] + \varepsilon.
\end{equation}

Summing over the horizon:
\begin{align}
c(\hat{\pi}) - c(\pi_E)
&= \sum_{t=1}^H \Big( \mathbb{E}_{s \sim d_t^{\hat{\pi}}}[c_t(s)]
- \mathbb{E}_{s \sim d_t^{\pi_E}}[c_t(s)] \Big) \nonumber \\
&\le \sum_{t=1}^H \varepsilon
= H \varepsilon.
\end{align}

This yields the desired bound.
\end{proof}

\begin{theorem}[Variance bound for shortened rollout]
\label{thm:var-reduction}
Under bounded rewards $|r_t| \le R_{\max}$, bounded policy score norms $\|\nabla_\theta \log \pi_\theta(a_t|s_t)\| \le G_{\max}$, and discount $\gamma \in (0,1)$, the truncated policy gradient estimator
\begin{equation}
g_k = \sum_{t=k}^H \nabla_\theta \log \pi_\theta(a_t|s_t) 
\cdot \left( \sum_{t'=t}^H \gamma^{\,t'-t} r_{t'} \right)
\label{eq:pg-trunc}
\end{equation}
satisfies
\begin{equation}
\mathrm{Var}[g_k] \le \frac{C}{(1-\gamma)^2} \left( (H-k+1) - \frac{\gamma(1-\gamma^{H-k+1})}{1-\gamma} \right)^2,
\quad C := G_{\max}^2 R_{\max}^2,
\label{eq:var-appendix-tight}
\end{equation}
and the bound decreases monotonically as $k$ increases.
\end{theorem}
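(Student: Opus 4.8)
The plan is to turn an almost-sure (deterministic) bound on the norm of $g_k$ into the variance bound, exploiting that the estimator is uniformly bounded under the stated assumptions. The first step is to control each discounted return $G_t = \sum_{t'=t}^H \gamma^{\,t'-t} r_{t'}$ by the triangle inequality together with $|r_t| \le R_{\max}$, giving
\[
|G_t| \;\le\; R_{\max} \sum_{t'=t}^H \gamma^{\,t'-t} \;=\; R_{\max} \sum_{j=0}^{H-t} \gamma^{j} \;=\; R_{\max}\,\frac{1-\gamma^{\,H-t+1}}{1-\gamma}.
\]
Combining this with the score-norm bound $\|\nabla_\theta \log \pi_\theta(a_t\mid s_t)\| \le G_{\max}$ and submultiplicativity of the norm, each summand of $g_k$ obeys $\|\nabla_\theta \log \pi_\theta(a_t\mid s_t)\,G_t\| \le G_{\max} R_{\max}\,\frac{1-\gamma^{H-t+1}}{1-\gamma}$.

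Next I would sum these per-term bounds over $t = k,\dots,H$ via the triangle inequality and reduce the resulting sum to the closed form appearing on the right-hand side. Substituting $j = H - t + 1$ (so that $j$ ranges over $1,\dots,H-k+1$) yields $\sum_{t=k}^H (1-\gamma^{H-t+1}) = \sum_{j=1}^{H-k+1}(1-\gamma^{j}) = (H-k+1) - \frac{\gamma(1-\gamma^{H-k+1})}{1-\gamma}$, so that
\[
\|g_k\| \;\le\; \frac{G_{\max} R_{\max}}{1-\gamma}\left( (H-k+1) - \frac{\gamma(1-\gamma^{H-k+1})}{1-\gamma}\right) \;=:\; B_k .
\]
This is the crux of the bookkeeping: identifying the geometric sum so that the bound matches the stated expression exactly.

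To pass from this deterministic bound to the variance, I would use that for a vector-valued estimator $\mathrm{Var}[g_k] = \mathbb{E}\|g_k - \mathbb{E}[g_k]\|^2 = \mathbb{E}\|g_k\|^2 - \|\mathbb{E}[g_k]\|^2 \le \mathbb{E}\|g_k\|^2 \le B_k^2$, since $\|g_k\| \le B_k$ almost surely. Squaring $B_k$ and writing $C = G_{\max}^2 R_{\max}^2$ gives precisely inequality~\eqref{eq:var-appendix-tight}. The only conceptual subtlety here is interpreting $\mathrm{Var}$ as the expected squared deviation (the trace of the covariance), which is dominated by the second moment; there is no genuine obstacle beyond this.

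Finally, for monotonicity I would set $n = H-k+1$ and $f(n) = n - \frac{\gamma(1-\gamma^{n})}{1-\gamma}$, so the bound is $\frac{C}{(1-\gamma)^2} f(n)^2$. Computing the discrete difference gives $f(n) - f(n-1) = 1 - \gamma^{n} > 0$ for $\gamma \in (0,1)$, so $f$ is strictly increasing and nonnegative in $n$; since increasing $k$ decreases $n = H-k+1$, both $f(n)$ and hence $f(n)^2$ decrease, establishing that the bound decreases monotonically as $k$ increases. Overall the argument is elementary, and the main care is simply in the index manipulation that produces the exact closed-form factor.
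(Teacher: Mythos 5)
Your proposal is correct and follows essentially the same route as the paper's proof: bound each discounted return $|G_t|$ by $R_{\max}(1-\gamma^{H-t+1})/(1-\gamma)$, apply the score-norm bound and the triangle inequality, evaluate the resulting sum via the substitution $j=H-t+1$, and dominate the variance by the second moment of the deterministically bounded estimator. Your explicit discrete-difference check $f(n)-f(n-1)=1-\gamma^n>0$ for the monotonicity claim is a small but welcome addition, since the paper merely asserts that the bound decreases in $k$.
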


\begin{proof}
The estimator \eqref{eq:pg-trunc} can be written as a sum of random variables:
\begin{equation}
g_k = \sum_{t=k}^H X_t, \quad X_t := \nabla_\theta \log \pi_\theta(a_t|s_t) \; G_t,
\label{eq:xt-def}
\end{equation}
where $G_t$ is the discounted return from $t$:
\begin{equation}
G_t := \sum_{t'=t}^H \gamma^{\,t'-t} r_{t'}.
\label{eq:gt-def}
\end{equation}

Under the boundedness assumptions,
\begin{align}
\| \nabla_\theta \log \pi_\theta(a_t|s_t) \| &\le G_{\max}, \label{eq:grad-bound-appendix} \\
|G_t| &\le R_{\max} \cdot \frac{1 - \gamma^{H-t+1}}{1 - \gamma}. \label{eq:return-bound-appendix}
\end{align}
The norm of each term $\|X_t\|$ is therefore bounded by:
\begin{equation}
\|X_t\| \le G_{\max} |G_t| \le G_{\max} R_{\max} \frac{1 - \gamma^{H-t+1}}{1 - \gamma}.
\end{equation}

To bound the variance, we bound the norm of the total gradient estimator $g_k$ using the triangle inequality and the time-dependent bound on $|G_t|$ from \eqref{eq:return-bound-appendix}:
\begin{align}
\|g_k\| = \left\| \sum_{t=k}^H X_t \right\| &\le \sum_{t=k}^H \|X_t\| \nonumber \\
&\le \sum_{t=k}^H G_{\max} R_{\max} \frac{1 - \gamma^{H-t+1}}{1 - \gamma} \nonumber \\
&= \frac{G_{\max} R_{\max}}{1-\gamma} \sum_{t=k}^H (1 - \gamma^{H-t+1}).
\end{align}
The summation can be computed by letting $j = H-t+1$:
\begin{equation}
\sum_{j=1}^{H-k+1} (1-\gamma^j) = (H-k+1) - \sum_{j=1}^{H-k+1} \gamma^j = (H-k+1) - \gamma \frac{1-\gamma^{H-k+1}}{1-\gamma}.
\end{equation}
Substituting this back gives the bound on the norm of $g_k$:
\begin{equation}
\|g_k\| \le \frac{G_{\max} R_{\max}}{1-\gamma} \left( (H-k+1) - \frac{\gamma(1-\gamma^{H-k+1})}{1-\gamma} \right).
\end{equation}
The variance is bounded by the second moment, $\mathrm{Var}[g_k] \le \mathbb{E}[\|g_k\|^2]$. Since our bound on $\|g_k\|$ is a deterministic constant, we have:
\begin{equation}
\mathrm{Var}[g_k] \le \frac{C}{(1-\gamma)^2} \left( (H-k+1) - \frac{\gamma(1-\gamma^{H-k+1})}{1-\gamma} \right)^2,
\end{equation}
where $C := G_{\max}^2 R_{\max}^2$, this bound decreases monotonically as $k$ increases.
\end{proof}

\section{Datasets and implementation details}
\begin{table}[ht]
\centering
\renewcommand{\arraystretch}{1.2}
\small
\setlength{\tabcolsep}{5pt}
\caption{Overview of evaluation benchmarks used in our experiments.}
\label{tab:benchmarks}
\begin{threeparttable}
\begin{tabular}{p{2.0cm} p{3.2cm} p{2.0cm} p{5.2cm}}
\toprule
\textbf{Category} & \textbf{Dataset} & \textbf{Test Size} & \textbf{Description} \\
\midrule
\multirow{10}{*}{\centering Math} 
& AIME24$^{1}$ & 30 problems & 2024 AIME math problems in algebra and geometry; used to assess advanced reasoning. \\
& AIME25$^{2}$ & 30 problems & 2025 AIME I\&II covering algebra, combinatorics, geometry, and number theory. \\
& MATH500~\citep{lightman2023let} & 500 problems & High-difficulty MATH subset with university-level algebra, calculus, and number theory. \\
& OlymMath~\citep{sun2025challenging} & 200 problems & Olympiad-level mathematics to evaluate the model's abilities in hard cases. \\

\midrule
\multirow{9}{*}{\centering Factual} 
& HotPotQA~\citep{yang2018hotpotqa} & 200 QA pairs & Wikipedia-based multi-hop QA on complex retrieval and reasoning. \\
& 2Wiki~\citep{ho2020constructing} & 200 QA pairs & Multi-document QA requiring multi-step reasoning from two Wikipedia articles. \\
& Musique~\citep{trivedi2022musique} & 200 QA pairs & Multi-hop QA benchmark for semantic understanding and logical inference. \\
& Bamboogle~\citep{press2022measuring} & 125 QA pairs & Two-hop questions that are challenging for common web search engines yet have evidence available on Wikipedia. \\
\midrule
\multirow{8}{*}{\centering Deep Search} 
& GAIA~\citep{mialon2023gaia} & 103 queries & Reasoning, web navigation, and tool-use tasks for AI assistant evaluation. \\
& HLE~\citep{phan2025humanity} & 500 queries & Interdisciplinary and abstract problems demanding advanced reasoning. \\
& WebWalker~\citep{wu2025webwalker} & 200 queries & Web-navigation tasks in dynamic, multi-hop retrieval settings. \\
& xBench~\citep{chen2025xbench} & 100 queries & Deep-search test of breadth and depth in agent reasoning. \\
\bottomrule
\end{tabular}
\begin{tablenotes}
\item[1] \url{https://huggingface.co/datasets/HuggingFaceH4/aime_2024}
\item[2] \url{https://huggingface.co/datasets/math-ai/aime25}
\end{tablenotes}
\end{threeparttable}
\end{table}

\noindent\textbf{Benchmarks and Evaluation Protocol.}
We evaluate on exactly the same benchmark datasets and official test splits as ARPO~\citep{dong2025agentic}, ensuring a fair comparison. Table~\ref{tab:benchmarks} shows the composition and scale of the test benchmark. Open-domain QA tasks are measured using averaged token-level F1 against the ground truth, following ARPO's protocol, while correctness on mathematical reasoning and deep search tasks is judged by \texttt{Qwen2.5-72B-Instruct} under the \emph{LLM-as-a-judge} paradigm~\citep{zheng2023judging}. In Appendix Table~\ref{tab:math_results}, \textbf{we also report Exact Match (EM)}.

\noindent\textbf{Supervised Fine-Tuning.}
For BC initialization and \emph{SCoRe-SFT}, we fine-tune all backbones using the LLaMAFactory framework~\citep{zheng2024llamafactory} with a learning rate of $7\times 10^{-6}$, the AdamW optimizer with weight decay 0.1, and a global batch size of 128 for 3 epochs. Training employs DeepSpeed ZeRO-3~\citep{rasley2020deepspeed} and FlashAttention-2~\citep{dao2023flashattention} for efficiency, with BF16 mixed precision and a maximum sequence length of 4096 tokens. All tool execution outputs are excluded from the loss; we compute the loss only on natural-language reasoning text and tool invocation requests, ensuring that the model focuses on reasoning quality rather than memorizing tool responses. The training data consists of two distinct categories: search and math. For BC initialization, we use 2,031 search trajectories and 2,080 math trajectories, while \emph{SCoRe-SFT} employs 4,990 search and 5,019 math trajectories for correction-based fine-tuning.

\noindent\textbf{Reinforcement Learning.}
The \emph{SCoRe-RL} phase performs short-horizon fine-tuning using the GRPO algorithm~\citep{shao2024deepseekmath} within the VERL framework~\citep{sheng2024hybridflow}. We adopt a global batch size of 128, a PPO mini-batch size of 16, a rollout size of 16, and a maximum response length of 4,096 tokens. Training is conducted for 3 epochs on 8$\times$NVIDIA H20 GPUs, using 5,271 search trajectories and 5,639 math trajectories.

For factual reasoning datasets, the search tool is implemented via the Google Search API. To reduce tool-call latency during both training and inference, we include only the textual snippets returned by the API as the \texttt{[Observation]} content, omitting full browser navigation or long-form summarization. This design substantially reduces retrieval time while preserving sufficient context to answer queries effectively.

\label{sec:details}

\section{Additional Experiment Results}
\label{sec:more_results}

\begin{table*}[t]
\centering
\caption{Overall performance on eight challenging reasoning tasks for other base models. The best outcomes among models of the same size are \textbf{bolded}. ``Avg.'' denotes the average score across tasks. Results for GRPO and ARPO are mostly taken from~\cite{dong2025agentic}.}
\label{tab:reasoning_appendix}
\setlength{\tabcolsep}{8pt}
 
\resizebox{0.975\textwidth}{!}{
\begin{tabular}{ccccccccccc}
\toprule
\multirow{2}{*}{Method} & \multicolumn{4}{c}{\textbf{Mathematical Reasoning}} & \multicolumn{4}{c}{\textbf{Factual Reasoning}} & \multirow{2}{*}{\textbf{Avg.}} \\
\cmidrule(lr){2-5} \cmidrule(lr){6-9}
& AIME24 & AIME25 & MATH500 & OlymMath & HQA & 2Wiki & MuSiQ & Bamb & \\
\midrule

\multicolumn{10}{c}{\textbf{Student: Qwen2.5-3B-Instruct}}  \\
\midrule
Behavior Cloning~(BC) & 13.3 & 13.3 & 65.6 & 9.5  & 51.7 & 63.7 & 26.8 & 61.2 & 38.3 \\
BC + GRPO               & 20.0 & 13.3 & 72.0 & 16.0    & 56.5 & 64.5 & 24.7 & 65.2 & 41.5 \\
BC + ARPO               & 20.0 & 20.0 & 71.4 & 14.5   & 58.5 & 67.4 & 28.7 & 66.8 & 43.4 \\
\rowcolor{blue!10} SCoRe-SFT & 20.0 & 13.3 & 67.0 & 12.5 & 55.9 & 71.6 & 27.8 & 67.5 & 42.0 \\
\rowcolor{blue!10} SCoRe-RL  & \textbf{26.7} & \textbf{20.0} & \textbf{72.4} & \textbf{17.5} & \textbf{59.2} & \textbf{75.6} & \textbf{29.0} & \textbf{73.2} & \textbf{46.7} \\

\midrule

\multicolumn{10}{c}{\textbf{Student: Llama3.1-8B-Instruct}}  \\
\midrule
Behavior Cloning~(BC) & 6.7 & 6.7 & 58.0 & 8.5  & 55.6 & 65.5 & 28.6 & 69.1 & 37.3 \\
BC + GRPO              & 13.3 & 13.3 & 62.4 & 14.5    & 57.8 & 71.8 & 31.0 & 68.2 & 41.5 \\
BC + ARPO              & 23.3 & 16.7 & 64.6 & 11.0   & \textbf{65.4} & 75.5 & 34.8 & 73.8 & 45.6 \\
\rowcolor{blue!10} SCoRe-SFT & 6.7 & 10.0 & 60.0 & 10.0 & 59.5 & 71.7 & 31.1 & 71.4 & 40.1 \\
\rowcolor{blue!10} SCoRe-RL  & \textbf{26.7}  &  \textbf{20.0}  &  \textbf{65.4}  & \textbf{16.0}   & 64.2 & \textbf{76.5} & \textbf{35.3} & \textbf{76.2} & \textbf{47.5} \\
\bottomrule
\end{tabular}
}
\end{table*}

\begin{table}[t]
\centering
\caption{Comparison with Agent Distillation~\citep{kang2025distilling} on mathematical tasks. Results with a gray background are reported from the original paper. All math tasks are evaluated by exact match, consistent with ~\citet{kang2025distilling}. 
}
\label{tab:math_results}
\resizebox{0.7\textwidth}{!}{
\begin{tabular}{lccccc}
\toprule
Method & MATH500 & GSM-Hard & AIME & OlymMATH & Avg. \\
\midrule
\multicolumn{6}{c}{Qwen2.5-32B-Instruct} \\
\midrule
\rowcolor{gray!15} CoT Prompting & 79.2 & 74.6 & 13.3 & 6.0 & 43.3 \\
\rowcolor{gray!15} Agent Prompting & 69.2 & 76.4 & 21.1 & 11.5 & 44.6 \\
\midrule
\multicolumn{6}{c}{Student: Qwen2.5-7B-Instruct} \\
\midrule
\rowcolor{gray!15} CoT Distillation + RAG & 68.0 & 60.6 & 6.7 & 5.0 & 35.1 \\
\rowcolor{gray!15} Agent Distillation & 67.8 & 72.4 & 15.6 & 11.5 & 41.8 \\
\emph{SCoRe-RL} & 80.2 & 79.7 & 21.2 & 16.0 & 49.3 \\
\midrule
\multicolumn{6}{c}{Student: Qwen2.5-3B-Instruct} \\
\midrule
\rowcolor{gray!15} CoT Distillation + RAG & 59.6 & 53.2 & 5.6 & 4.5 & 30.7 \\
\rowcolor{gray!15} Agent Distillation & 60.2 & 65.4 & 15.6 & 7.0 & 37.1 \\
\emph{SCoRe-RL} & 69.2 & 70.2 & 18.0 & 11.0 & 42.1 \\
\bottomrule
\end{tabular}
}
\end{table}

\begin{table}[t]
\centering
\caption{Additional baseline results on deep search. We report stronger baselines, including On-Policy Distillation (OPD), a PRM-style baseline with step-level rewards, and ReLIFT. For OPD, we use Qwen3-32B, the largest dense Qwen3 model, as the teacher for the Qwen3-8B student, since the teacher and student typically must come from the same model family.}
\label{tab:more_baselines_deep_search}
\setlength{\tabcolsep}{10pt}
\resizebox{0.7\linewidth}{!}{
\begin{tabular}{cccccc}
\toprule
Method & GAIA & HLE & XBench & WebWalker & Avg. \\
\midrule
\multicolumn{6}{c}{\textbf{Student: Qwen3-8B (Non-thinking)}} \\
\midrule
OPD & 30.1 & 9.2 & 20.0 & 37.5 & 24.2 \\
BC + GRPO (Step Reward) & 34.0 & 9.6 & 23.0 & 35.5 & 25.5 \\
ReLIFT & 35.0 & 9.4 & 23.0 & 39.5 & 26.7 \\
\rowcolor{blue!10} \emph{SCoRe-RL}  & \textbf{40.8} & \textbf{11.0} & \textbf{27.0} & \textbf{43.0} & \textbf{30.5} \\
\bottomrule
\end{tabular}
}
\end{table}


\begin{figure}[t] 
\centerline{
\includegraphics[width=0.975\textwidth]{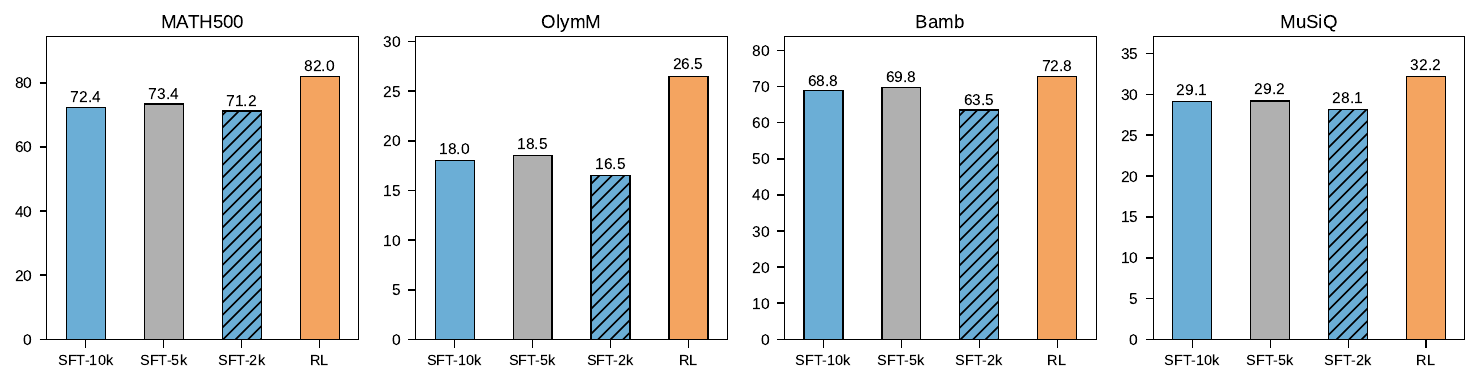}} 
\caption{
Performance of models trained with SFT on MPS-generated data (data scales: 10K, 5K, 2K), compared to an RL-trained model. For math tasks, performance is measured as agreement between generated and reference answers, using \texttt{Qwen2.5-72B-Instruct}; QA tasks are evaluated using the F1 score for answer similarity. The evaluation protocol matches that used in the main paper.
}
\label{fig:scale} 
\end{figure}

\begin{figure}[t] 
\centerline{
\includegraphics[width=0.975\textwidth]{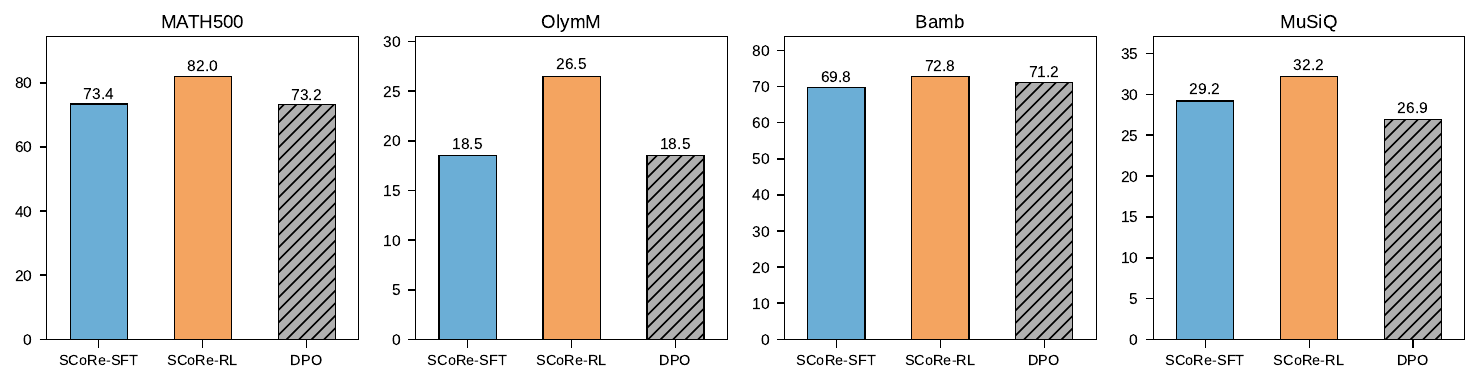}} 
\caption{
Performance comparison of SCoRe-SFT, SCoRe-RL, and a DPO baseline. While DPO uses the same MPS-generated data as \emph{SCoRe-RL} in a preference-learning formulation, it yields only marginal gains over SCoRe-SFT. The evaluation protocol matches that used in the main paper.}
\label{fig:alg} 
\end{figure}

\noindent\textbf{More baselines in reasoning benchmarks.} Table~\ref{tab:math_results} compares CoT Distillation, Agent Distillation, and our proposed \emph{SCoRe-RL}, with all methods evaluated using the exact-match metric on math reasoning benchmarks, following~\citep{kang2025distilling}.
CoT Distillation trains smaller models to replicate step-by-step reasoning traces generated by large LLMs via Chain-of-Thought prompting. While simple and effective, its reliance on static reasoning limits generalization to novel knowledge or precise computation and can lead to hallucinations.
Agent Distillation~\citep{kang2025distilling} transfers both reasoning and tool-use skills from LLM agents by imitating complete \texttt{[Thought-Action-Observation]} trajectories. Its training process is similar to our BC baseline in the main experiments (Tables~\ref{tab:reasoning} and~\ref{tab:deep_search}), but introduces additional enhancements: a first-thought prefix to produce higher-quality trajectories and a self-consistent action generation strategy to improve inference robustness.

Agent Distillation has been shown to outperform vanilla CoT Distillation by incorporating structured, agent-style reasoning into the distillation process. \emph{SCoRe-RL} further exploits correction-based trajectories and RL to deliver substantial performance gains. On \texttt{Qwen2.5-7B-Instruct}, \emph{SCoRe-RL} attains an average score of 49.3, a +7.5 improvement over Agent Distillation (41.8), with notable gains on challenging benchmarks such as AIME (+5.6) and OlymMATH (+4.6). On \texttt{Qwen2.5-3B-Instruct}, it raises the average score to 42.1, +5.0 over Agent Distillation (37.1). These results demonstrate that \emph{SCoRe-RL} enables small models to achieve significant improvements beyond existing distillation baselines, narrowing the gap to much larger teacher models.

\noindent\textbf{More baselines in deep search benchmarks.}  We additionally implement three stronger baselines, and compare with them on the deep search benchmarks.

First, we implement a \emph{PRM-style} baseline (\textbf{BC + GRPO (Step Reward)}), where the teacher evaluates each Thought--Action--Observation step following the general setup of~\citet{zhang2025lessons}. The reward is defined as the proportion of steps judged to move toward the correct solution, providing denser teacher-guided supervision than binary task-success rewards.

Besides, we implement \emph{On-Policy Distillation} (OPD)~\cite{lu2025onpolicydistillation}. Since OPD typically requires the teacher and student to come from the same model family, for the deep search setting with a Qwen3-8B student we use the largest dense Qwen3 model as the teacher. This teacher is slightly stronger than the main teacher used in \emph{SCoRe}.

We also implement \emph{ReLIFT}~\cite{ma2025learning}, an interleaved RL--SFT baseline originally proposed for reasoning tasks. Since our setting involves agentic reasoning with tool use, the number of hardest samples, i.e., queries on which all rollout attempts fail, is relatively small. To provide ReLIFT with sufficient online SFT data, we use twice the training data, with seed question--answer pairs drawn from ToolStar, the same source as ours.

The results in Table~\ref{tab:more_baselines_deep_search} suggest that teacher guidance alone is not sufficient. The PRM-style baseline already provides dense teacher supervision at every step, yet it still underperforms SCoRe-RL. As illustrated in Figure~1, \emph{SCoRe} retains teacher corrections only when they are validated by the student's success, and further combines such verified guidance with short-horizon RL. We believe this coupling of verification and localized policy optimization is a key advantage of \emph{SCoRe}.

On-policy distillation also underperforms SCoRe. A likely reason is that it directly imitates teacher trajectories that may not be well matched to the student's current capabilities. Although this mismatch may be reduced with substantially more data and compute as the student improves, under our training budget \emph{SCoRe} remains more effective.

\emph{SCoRe-RL} also outperforms ReLIFT. We suppose that short-horizon RL provides a more stable optimization signal in our setting, whereas ReLIFT may require a larger training budget and more ReLIFT-style hard samples to achieve comparable performance.

\noindent\textbf{RL vs.\ SFT.} In the main experiments, \emph{SCoRe} was first initialized via BC on 5K samples, followed by SFT on 5K MPS-generated samples and RL on another 5K samples. We further examine whether using all data for SFT yields additional gains and assess the impact of SFT data scale. As shown in Figure~\ref{fig:scale}, for both mathematical and search tasks, MPS-generated data improves performance; however, increasing the SFT set from 5K to 10K offers no clear benefit and in some cases slightly degrades results. In contrast, RL training consistently delivers substantial improvements across tasks, indicating that SFT alone is insufficient, while RL is particularly effective for agentic-style tasks.

We also compare our \emph{SCoRe-RL} against a DPO baseline using the same MPS-generated data. As shown in Figure~\ref{fig:alg}, DPO yields only marginal improvements over \emph{SCoRe-SFT}, whereas \emph{SCoRe-RL} achieves large gains on all four benchmarks. This is because DPO requires high data diversity. Specifically, DPO requires multiple alternative completions under the same prefix to fully leverage preference learning. By contrast, \emph{SCoRe}'s short-horizon RL directly optimizes the policy using true task-level returns, avoids reliance on large and diverse preference pairs, and stabilizes training by rolling out from the verified prefix preceding the error.


\begin{figure}[t]
\begin{tcolorbox}[
    colback=gray!10,
    colframe=gray,
    width=\linewidth,
    arc=1mm, auto outer arc,
    boxrule=0.5pt,
    title=FIND\_WRONG\_STEP\_PROMPT
]
You are a precise evaluator. Your task is to analyze a step-by-step reasoning process (step 1 is a first-thought prefix, which is an overall idea for solving this problem, and the remaining steps are the ``thought-code cycle'') and determine if the final answer is correct.\\
\\
\textbf{INSTRUCTIONS:}\\
1.\quad Review the entire ``Thought-Code Cycle'' history provided below.\\
2.\quad Compare the final answer to the true answer.\\
3.\quad \textbf{If the answer is correct:}\\
\hspace*{1em}- The ``error\_analysis'', ``correction\_start\_step'' and ``correction\_suggestion'' fields in your JSON output should be null.\\
4.\quad \textbf{If the answer is incorrect:}\\
\hspace*{1em}- \textbf{Pinpoint the exact step} in the cycle where the error occurred in ``correction\_start\_step''.\\
\hspace*{1em}- \textbf{Explain the nature of the error} (e.g., ``The calculation in step 1 was correct, but the rounding in step 2 was incorrect.'').\\
\hspace*{1em}- \textbf{Suggest a specific correction} for the erroneous step.\\
5.\quad Conclude your response with a single JSON object on a new line. The JSON object must contain the following keys: ``is\_correct'', ``error\_analysis'', ``correction\_start\_step'', ``correction\_suggestion''.\\
\\
\textbf{EXAMPLE (Incorrect Answer):}\\
Question: What is 10 / 3, rounded to the nearest integer?\\
Correct Answer: 3\\
Thought-Code Cycle:\\
Step 1: $<$first\_thought$>$I will use the math packages of python to solve the problem.$<$/first\_thought$>$\\
Step 2:\\
Thought: I will divide 10 by 3 and then round the result up.\\
Code:\\
\verb|```python|\\
\verb|import math|\\
\verb|result = math.ceil(10 / 3)|\\
\verb|print(result)|\\
\verb|```|\\
Observation: 4\\
Step 3:\\
Thought: I will provide the final answer.\\
Code:\\
\verb|```python|\\
\verb|final_answer_print("\\boxed{{result}}")|\\
\verb|```|\\
Observation: 4\\
\\
\textbf{YOUR RESPONSE:}\\
\verb|```json|\\
\verb|{|\newline
\verb|    "is_correct": false,|\\
\verb|    "error_analysis": "...",|\\
\verb|    "correction_start_step": 2,|\\
\verb|    "correction_suggestion": "..."|\\
\verb|}|\\
\verb|```|
\end{tcolorbox}
\caption{Prompt template for the teacher to find the wrong step}
\label{fig:judge_prompt}
\end{figure}

\section{Prompts used in Mentored Problem-Solving}
\label{sec:prompts}
Figure~\ref{fig:judge_prompt} and~\ref{fig:correct_prompt} illustrate how prompts are used to guide the teacher to find the wrong step and correct it, while Table~\ref{tab:math_case} and~\ref{tab:search_case} present examples of MPS data generation.

\begin{figure}[t]
\begin{tcolorbox}[
    colback=gray!10,    
    colframe=gray,      
    width=\linewidth,
    arc=1mm, auto outer arc,
    boxrule=0.5pt,
    title=CORRECT\_PROMPT,
]
\textbf{Question:} \{original\_query\}\\
\textbf{Previous Context:} \{previous\_context\}\\
\textbf{Error Step:} \{error\_step\}\\
\textbf{Failed Experience:} \{failed\_experience\}\\
\\
Based on the above failure analysis, generate the next thought and code to correct the mistake. Provide only one step of thought and code, not the complete solution.\\
\textbf{IMPORTANT:}\\
1.\quad Always provide a 'Thought:' sequence, and a 'Code: ```python` sequence ending with '```' sequence, else you will fail. For math problems that are not multiple-choice, always output the final answer using LaTeX \verb|\boxed| format. Provide the exact value (e.g., \verb|\\boxed{{\\frac{{19}}{{14}}}}|, \verb|\\boxed{{\\sqrt{{2}}}}|), not a decimal approximation (e.g., \verb|\\boxed{{0.642857}}|, \verb|\\boxed{{1.41}}|).\\
2.\quad Write simple and short code for each step, and don't try to solve the whole problem in one go. A good code block should only do one thing and include only a brief comment that explains it.\\
3.\quad You are given a failed experience, please pay attention to it! BUT Don't ACT LIKE YOU'RE CORRECTING A MISTAKE.\\
4.\quad In the end you have to return a final answer, use the \verb|final_answer_print| tool to print it, or you will be in an endless loop!\\
Please only give the 'Thought and Code' for the current cycle.\\
\end{tcolorbox}
\caption{Prompt template for the teacher to correct the wrong step.}
\label{fig:correct_prompt}
\end{figure}

\begin{table}[b]
\centering
\caption{An example of \textbf{MPS} data generation for a math problem, with tagged thoughts (\textcolor{blue!70!black}{\texttt{<thought>}}), Python code, observations, and final answer (\textcolor{red!90!black}{\texttt{<answer>}}). Teacher corrections are highlighted in \textcolor{orange}{orange}.}
\label{tab:math_case}
\renewcommand{\arraystretch}{1.2}
\begin{tabular}{p{0.97\textwidth}}
\toprule
\textbf{Question:} In the figure, $BA = AD = DC$ and point $D$ lies on segment $BC$. The measure of $\angle ACD$ is $22.5^\circ$. What is the measure of $\angle ABC$? \\
\textbf{Labeled Answer:} $45^\circ$ \\
\midrule

\textbf{Step 0 (Student overall idea)} \\
\textcolor{blue!70!black}{\texttt{<first\_thought>}} To solve this problem, I need to analyze the given geometric configuration and use the properties of isosceles triangles and angle relationships. Since $BA = AD = DC$, triangles $ABD$ and $ADC$ are isosceles. I can use $\angle ACD = 22.5^\circ$ to find the other angles in $\triangle ADC$, and then use properties of isosceles triangles to find $\angle ABC$. \textcolor{blue!70!black}{\texttt{</first\_thought>}} \\
\midrule
\rowcolor{red!20} \textbf{Step 1 (Student incorrect attempt)} \\
\textcolor{blue!70!black}{\texttt{<thought>}} Since $BA = AD = DC$, triangles $BAC$, $ABD$, and $ADC$ are isosceles. I can use the fact that $\angle ACD = 22.5^\circ$ to find the other angles in $\triangle ADC$. \textcolor{blue!70!black}{\texttt{</thought>}}
\begin{lstlisting}[language=Python]
import math
# Given angle ACD = 22.5 degrees
angle_ACD = 22.5
# Since AD = DC, triangle ADC is isosceles with angle ADC = angle ACD
angle_ADC = angle_ACD
# The sum of angles in a triangle is 180 degrees
angle_CAD = 180 - 2 * angle_ACD
print(angle_CAD)
\end{lstlisting}
\textcolor{blue}{Observation: 135.0} \\

\midrule
\rowcolor{orange!20}
\textbf{Teacher correction at Step 1:} \\
\textcolor{blue!70!black}{\texttt{<thought>}} Given $\angle ACD = 22.5^\circ$ and $AD = DC$, it follows that $\angle CAD = \angle ACD = 22.5^\circ$. \textcolor{blue!70!black}{\texttt{</thought>}}

\begin{lstlisting}[language=Python]
# Given angle ACD = 22.5 degrees
angle_ACD = 22.5
# Isosceles triangle ADC: AD = DC, so angle DAC = 22.5 degrees
angle_CAD = angle_ACD
print(angle_CAD)
\end{lstlisting}
\textcolor{blue}{Observation: 22.5} \\

\midrule
\textbf{Step 2 (Student continues after correction)} \\
\textcolor{blue!70!black}{\texttt{<thought>}} Since $BA = AD$, $\triangle BAD$ is isosceles with $\angle ABD = \angle ADB$. Given $\angle ADC$, we find $\angle ADB$, then determine $\angle ABC = \angle ADB$. \textcolor{blue!70!black}{\texttt{</thought>}}
\begin{lstlisting}[language=Python]
# Find angle ADC
angle_ADC = 180 - angle_CAD - angle_ACD
# Angles ADC and ADB are supplementary
angle_ADB = 180 - angle_ADC
# Triangle BAD is isosceles (BA = AD): angle ABC = angle ADB
angle_ABC = angle_ADB
final_answer_print(angle_ABC)
\end{lstlisting}
\textcolor{blue}{Observation: 45.0} \\

\midrule
\textcolor{purple}{\texttt{<answer>}} 45.0 \textcolor{purple}{\texttt{</answer>}} \\
\bottomrule
\end{tabular}
\end{table}

\clearpage

\begin{longtable}{p{0.97\textwidth}}
\caption{An example of \textbf{MPS} data generation for a music trivia question, with wrong step and teacher correction. Teacher corrections are highlighted in \textcolor{orange}{orange}.}
\label{tab:search_case} \\
\toprule
\textbf{Question:} What is the award that the performer of the song ``Smoke On The Water'' earned? \\
\textbf{Labeled Answer:} Rock and Roll Hall of Fame \\
\midrule
\endfirsthead

\multicolumn{1}{c}%
{\tablename\ \thetable{} -- continued from previous page} \\
\toprule
\textbf{Question:} What is the award that the performer of the song ``Smoke On The Water'' earned? \\
\textbf{Labeled Answer:} Rock and Roll Hall of Fame \\
\midrule
\endhead

\midrule \multicolumn{1}{r}{{Continued on next page}} \\ \midrule
\endfoot

\bottomrule
\endlastfoot

\textbf{Step 0 (Student overall idea)} \\
\textcolor{blue!70!black}{\texttt{<first\_thought>}} To find out the award that the performer of the song ``Smoke On The Water'' earned, I will first need to identify the performer of the song. Once I have the name of the performer, I can then search for any awards they have won. I will use a search tool to gather this information. \textcolor{blue!70!black}{\texttt{</first\_thought>}} \\

\midrule
\textbf{Step 1 (Correct prefix)} \\
\textcolor{blue!70!black}{\texttt{<thought>}} I need to find out who performed the song ``Smoke On The Water'' and then search for any awards they have won. \textcolor{blue!70!black}{\texttt{</thought>}}
\begin{lstlisting}[language=Python]
web_search("performer of Smoke On The Water")
\end{lstlisting}
\textcolor{blue}{Observation: Web Pages: ... (The search results show that ``Smoke On The Water'' was performed by Deep Purple, with Ian Gillan as lead singer).} \\

\midrule
\textbf{Step 2 (Correct prefix)} \\
\textcolor{blue!70!black}{\texttt{<thought>}} The search results indicate the performer is Deep Purple (lead singer Ian Gillan). Next, I need to search for any awards that Ian Gillan or Deep Purple have won. \textcolor{blue!70!black}{\texttt{</thought>}}
\begin{lstlisting}[language=Python]
web_search("awards won by Ian Gillan")
\end{lstlisting}
\textcolor{blue}{Observation: Web Pages: ... (Found references to Ivor Novello Award for International Achievement, Legend Award at 2008 World Music Awards, and Classic Rock Innovator Award).} \\

\midrule

\rowcolor{red!20}
\textbf{Step 3 (Student incorrect attempt)} \\
\textcolor{blue!70!black}{\texttt{<thought>}} The most relevant award is the Ivor Novello Award for International Achievement, received by Ian Gillan, Roger Glover, and Ian Paice of Deep Purple. I will use this as the final answer. \textcolor{blue!70!black}{\texttt{</thought>}}
\begin{lstlisting}[language=Python]
final_answer_print("Ivor Novello Award for International Achievement")
\end{lstlisting}
\textcolor{blue}{Observation: Ivor Novello Award for International Achievement(Incorrect in context)} \\

\midrule

\rowcolor{orange!20}
\textbf{Teacher correction at Step 3:} \\
\rowcolor{orange!20}Error Analysis: While Deep Purple members did win the Ivor Novello Award, the question asks for the award that the performer of ``Smoke On The Water'' earned. However, the band has not received this award in any context directly related to ``Smoke on the Water''. \\
\textcolor{blue!70!black}{\texttt{<thought>}} The most significant award for Deep Purple, especially tied to the legacy of ``Smoke On The Water'', is their Rock and Roll Hall of Fame induction in 2016. I will verify this. \textcolor{blue!70!black}{\texttt{</thought>}}
\begin{lstlisting}[language=Python]
web_search("Deep Purple Rock and Roll Hall of Fame")
\end{lstlisting}
\textcolor{blue}{Observation: Web Pages: ... (Confirmed — Deep Purple inducted into the Rock and Roll Hall of Fame in 2016).} \\
\midrule
\textbf{Step 4 (Student continues after correction)} \\
\textcolor{blue!70!black}{\texttt{<thought>}} Based on confirmation, the correct award is Rock and Roll Hall of Fame. \textcolor{blue!70!black}{\texttt{</thought>}}
\begin{lstlisting}[language=Python]
final_answer_print("Rock and Roll Hall of Fame")
\end{lstlisting}
\textcolor{blue}{Observation: Rock and Roll Hall of Fame} \\
\midrule
\textcolor{purple}{\texttt{<answer>}} Rock and Roll Hall of Fame \textcolor{purple}{\texttt{</answer>}} \\

\end{longtable}


\end{document}